\newcommand{\R}{\mathbb{R}}
\newcommand{\E}{\mathbb{E}}
\newcommand{\I}{\mathbb{I}}
\newcommand{\mle}{\textrm{mle}}
\DeclareMathOperator*{\argmax}{argmax}
\newtheorem{theorem}{Theorem}
\newtheorem{lemma}{Lemma}
\newtheorem{definition}{Definition}
\newcommand{\cvec}[2]
{
\left[\begin{array}{c}
#1 \\ #2 
\end{array}\right] 
}
\newcommand{\symmat}[3]
{
\left[\begin{array}{cc}
#1 & #2 \\ #2 & #3
\end{array}\right] 
}
\newcommand{\subalign}[1]{%
  \vcenter{%
    \Let@ \restore@math@cr \default@tag
    \baselineskip\fontdimen10 \scriptfont\tw@
    \advance\baselineskip\fontdimen12 \scriptfont\tw@
    \lineskip\thr@@\fontdimen8 \scriptfont\thr@@
    \lineskiplimit\lineskip
    \ialign{\hfil$\m@th\scriptstyle##$&$\m@th\scriptstyle{}##$\hfil\crcr
      #1\crcr
    }%
  }%
}
\DeclareMathOperator\supp{supp}
\title{Off-policy Confidence Sequences}
\author[1]{Nikos Karampatziakis}
\author[1]{Paul Mineiro}
\author[2]{Aaditya Ramdas}
\affil[1]{Microsoft}
\affil[2]{Carnegie Mellon University}
\affil[ ]{\texttt {\{nikosk,pmineiro\}@microsoft.com\ aramdas@cmu.edu}}
\date{\today}
\begin{document}

\maketitle
\begin{abstract}
	We develop confidence bounds that hold uniformly over time for off-policy
	evaluation in the contextual bandit setting. These confidence sequences
	are based on recent ideas from martingale analysis and are
	non-asymptotic, non-parametric, and valid at arbitrary stopping times.
	We provide algorithms for computing these confidence sequences that
	strike a good balance between computational and statistical efficiency.
	We empirically demonstrate the tightness of our approach in terms of
	failure probability and width and apply it to the ``gated deployment'' problem of safely upgrading a production contextual bandit system.
\end{abstract}

\section{Introduction} 
Reasoning about the reward that a new policy $\pi$ would have achieved if it
had been deployed, a task known as Off-Policy Evaluation (OPE), is one of the
key challenges in modern Contextual Bandits (CBs)~\cite{epochgreedy} and Reinforcement Learning (RL).  A typical OPE use case is the validation of new modeling ideas
by data scientists. If OPE suggests that $\pi$ is better, this can then be
validated online by deploying the new policy to the real world.

The classic way to to answer whether $\pi$ has better reward than the current
policy $h$ is via a confidence interval (CI).  Unfortunately, CIs take a very
static view of the world.  Suppose that $\pi$ is better than $h$ and our OPE
shows a higher but not significantly better estimated reward. What should we
do? We could collect more data, but since a CI holds for a particular (fixed) sample size and is not designed to handle interactive/adaptive data collection, simply recalculating the CI at a larger sample size invalidates its coverage
guarantee. 

While there are ways to fix this, such as a crude union bound, the proper statistical tool for such cases is called a Confidence Sequence (CS). A CS is a
sequence of CIs such that the probability that they ever exclude the true value
is bounded by a prespecified quantity.  In other words, they retain validity under optional (early)
stopping and optional continuation (collecting more data).

In this work we develop CSs for OPE using recent insights from martingale
analysis (for simpler problems).  Besides the aforementioned high probability uniformly over time
guarantee, these CSs make no parametric assumptions and are easy to compute.
We use them to create a ``gated deployment'' primitive: instead of deploying
$\pi$ directly we keep it in a staging area where we compute its off-policy CS
as $h$ is collecting data.  Then $\pi$ can replace $h$ as soon as (if ever) we
can reject the hypothesis that $h$ is better than $\pi$.

We now introduce some notation to give context to our contributions.  We have
iid CB data of the form $(x,a,r)$ collected by a historical
policy $h$ in the following way. First a context $x$ was sampled from an unknown
distribution $D$.  Then $h$ assigns a probability to each action.  An action
$a$ is sampled with probability $h(a;x)$ and performed. A reward $r$ associated
with performing $a$ in situation $x$ is sampled from an unknown distribution
$R(x,a)$.  Afterwards, we wish to estimate the reward of another policy $\pi \ll h$. We have
\begin{equation}
\label{eq:change-of-measure}
V(\pi) = \E_{\subalign{x&\sim D\\a&\sim\pi(x)\\r&\sim R(x,a)}}[r]
=
\E_{\subalign{x&\sim D\\a&\sim h(x)\\r&\sim R(x,a)}}\left[\frac{\pi(a;x)}{h(a;x)}r\right]
\end{equation}
where the last quantity can be estimated from data.
%
Letting $w=\frac{\pi(a;x)}{h(a;x)}$  we see that 
$\E_{x\sim D,a\sim h}[w]=1$, where we write $w$ instead of $w(x,a)$ to reduce notation clutter. More generally
for any function $q(x,a)$ --- which is typically a predictor of the reward of $a$ at $x$ --- we have
\begin{equation}
\label{eq:control-variate}
\E_{x\sim D,a\sim h}[w q(x,a)] = \sum_{a'} \pi(a';x) q(x,a'),    
\end{equation}
which reduces to $\E[w]=1$ when $q(x,a)=1$ always.
Eq.~\eqref{eq:change-of-measure} and \eqref{eq:control-variate} are the
building blocks of OPE estimators.  The IPS estimator \cite{HT52} estimates
\eqref{eq:change-of-measure} via Monte Carlo: $\hat{V}^{\textrm{IPS}}(\pi) =
1/n \sum_{i=1}^n w_i r_i $.  A plethora of other OPE estimators are discussed
in Section~\ref{sec:related}. In general there is a tension between the
desirability of an unbiased estimator like $\hat V^{\textrm{IPS}}$ and the
difficulty of working with it in finite samples due to its excessive
variance.

Recently, \citet{kallus2019intrinsically} proposed an OPE estimator based on
Empirical Likelihood~\cite{owen2001empirical} with several desirable
properties. Empirical Likelihood (EL) has also been used to derive CIs for OPE
in CBs \cite{karampatziakis2019empirical} and RL \cite{dai2020coindice}. Our
CSs can be thought of as a natural extension to the online setting of the CIs
for OPE in the batch setting; its advantages include
\begin{itemize}\setlength\itemsep{-0.1em}
\item Our CSs hold non-asymptotically, unlike most existing CIs mentioned above which are either asymptotically valid (or nonasymptotic but overly conservative).
\item Our CSs are not unnecessarily conservative due to naive union bounds or peeling techniques.
\item We do not make any assumptions, either parametric 
or about the support of $w$ and $r$, beyond boundedness.
\item Our validity guarantees are time-uniform, meaning that they remain valid under optional continuation (collecting more data) and/or at stopping times, both of which are not true for all aforementioned CIs.
\end{itemize}

\section{Background: OPE Confidence Intervals}
We start by reviewing OPE CIs from the perspective of
\citet{karampatziakis2019empirical}. Their CI is constructed by considering
plausible distributions from a nonparametric family $\mathcal{Q}$ of
distributions  $Q$ for random vectors $(w,r) \in [0,w_{\max}]\times [0,1]$
under the constraint $\E_Q[w]=1$. Let $Q_{wr}$ be the probability that $Q \in
\mathcal{Q}$ assigns to the event where the importance weight is $w$ and the
reward is $r$. Then there exists $Q^* \in \mathcal{Q}$ such that
\[
Q^*_{wr}=\E_{x\sim D,a\sim h,\rho\sim R(x,a)}
\left[
\I\left[\frac{\pi(a;x)}{h(a;x)}=w\right]\cdot
\I\left[\rho=r\right]
\right]
\]
and 
$
V(\pi)=\E_{Q^*}[wr]
$.
To estimate of $V(\pi)$ we can find $Q^{\mle} \in \mathcal{Q}$ that maximizes
the data likelihood. To find a CI we minimize/maximize $\E_Q[wr]$ over
plausible $Q \in \mathcal{Q}$ so the data likelihood is not far off from that
of $Q^{\mle}$.

Using convex duality the MLE is
$
Q^{\mle}_{wr} = \frac{1}{n(1+\lambda_1^{\mle}(w-1))}
$
where $\lambda_1^{\mle}$ is a dual variable solving
\[
\lambda_1^{\mle} = \argmax_{\lambda_1} \sum_{i=1}^n \log(1+\lambda_1(w_i-1))
\]
subject to $1+\lambda_1(w_{\max}-1)\geq 0$, $1-\lambda_1\geq 0$.  The profile
likelihood
$
L(v)=\sup_{Q: \E_Q[w]=1, \E_Q[wr]=v} \prod_{i=1}^n Q_{w_i,r_i}
$
is used for CIs. From EL theory, an asymptotic $1-\alpha$-CI is
\[
\left\{v: -2\ln\left(\frac{\prod_{i=1}^n Q^{\mle}_{w_i,r_i}}{L(v)}\right)
\leq \chi_1^{2,1-\alpha}\right\}
\]
where $\chi_1^{2,1-\alpha}$ is the $1-\alpha$ quantile of a $\chi^2$
distribution with one degree of freedom.  Using convex duality the CI is
\[
\left\{v: 
B(v)-\sum_{i=1}^n\log(1+\lambda_1^{\mle}(w_i-1))
\leq \chi_1^{2,1-\alpha}\right\}
\]
where the dual profile log likelihood $B(v)$ is
\begin{equation}
B(v) = \sup_{\lambda_1,\lambda_2} \sum_{i=1}^n \log(1+\lambda_1(w_i-1)+\lambda_2(w_i r_i -v))    \label{eq:dual-lik}
\end{equation}
subject to $(\lambda_1,\lambda_2) \in \mathcal{D}_v^0$ where 
\begin{align}
\mathcal{D}_v^{m} =
\{(\lambda_1,\lambda_2): & 1+\lambda_1(w-1)+\lambda_2(wr-v)\geq m \label{eq:batch-domain}\notag\\
                         & \forall (w,r) \in \{0,w_{\max}\}\times \{0,1\}
\}.
\end{align}
The CI endpoints can be found via bisection on $v$.

\section{Off-policy Confidence Sequences}
We now move from the batch setting and asymptotics to online procedures and
finite sample, time-uniform results.  We adapt and extend ideas from
\citet{waudby-smith_variance-adaptive_2020} which constructs 
CSs for the means of random variables in $[0,1]$.  
Our key insight is to combine their construction with 
an interpretation of \eqref{eq:dual-lik} 
as the log wealth accrued by a
skeptic who is betting against the hypotheses 
\[
\E_{Q^*}[w]=1 \text{ and } \E_{Q^*}[wr]=v.
\]
In particular, the skeptic starts with a wealth of $1$ and wants to maximize
her wealth. Her bet on the outcome $w-1$ is captured by $\lambda_1$, while
$\lambda_2$ represents the bet on the outcome of $wr-v$ so that the wealth
after the $i$-th sample is multiplied by $1+\lambda_1(w_i-1)+\lambda_2 (w_i r_i
-v)$. If the outcomes had been in $[-1,1]$ then $|\lambda_1|$ and $|\lambda_2|$
would have an interpretation as the fraction of the skeptic's wealth that is
being risked on each step. The bets can be positive or negative, and their
signs represent the directions of the bet. For example, $\lambda_2<0$ means the
skeptic will make money if $w_ir_i-v<0$.  Enforcing the constraints
\eqref{eq:batch-domain} from the batch setting here means that the resulting
wealth cannot be negative.

The first benefit of this framing is that we have mapped the abstract concepts
of dual likekihood, dual variables, and dual constraints to more familiar
concepts of wealth, bets, and avoiding bankruptcy.  We now formalize our
constructions and show how they lead to always valid, finite sample, CSs. We
introduce a family of processes
\[
K_t(v) = \prod_{i=1}^t (1+\lambda_{1,i} (w_i-1) +\lambda_{2,i}(w_i r_i - v))
\]
where $\lambda_{1,i}$ and $\lambda_{2,i}$ are predictable, i.e. based on past data (formally, measurable with
respect to the sigma field $\sigma(\{(w_j,r_j)\}_{j=1}^{i-1})$).  We also formalize CIs and CSs below.
\begin{definition}
Given data $S_n=\{(x_i,a_i,r_i)\}_{i=1}^n$, where 
$x_i \sim D$, $a_i\sim h(\cdot;x_i)$,  $r_i \sim R(x_i,a_i)$, 
a $(1-\alpha)$-confidence interval 
for $V(\pi)$ is a set $C_n = C(h,\pi,S_n)$ such that
\[
\sup_{D,R} \Pr(V(\pi) \notin C_n) \leq \alpha.
\]
In contrast, a $(1-\alpha)$-confidence sequence for $V(\pi)$ 
is a sequence of confidence intervals $(C_t)_{t \in \mathbb{N}}$ such that
\[
\sup_{D,R} \Pr(\exists t \in \mathbb{N}: V(\pi) \notin C_t) \leq \alpha.
\]
\end{definition}

We now have the setup to state our first theoretical result.
\begin{theorem}
\label{thm:martingale}
\label{thm:ville}
\label{thm:cs}
$K_t(V(\pi))$ is a nonnegative martingale. Moreover,
the sequences $C_t = \{v:K_t(v)\leq \frac{1}{\alpha}\}$ 
and $\mathfrak{C}_t = \bigcap_{i=1}^t C_i$
are $(1-\alpha)$-confidence sequences for $V(\pi)$.
\end{theorem}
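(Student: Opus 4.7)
The argument has three ingredients: (i) nonnegativity of each factor in $K_t(v)$, (ii) the martingale identity at $v = V(\pi)$, and (iii) Ville's maximal inequality applied to the resulting nonnegative martingale.

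First I would verify nonnegativity of $K_t(v)$. The predictable bets $(\lambda_{1,i},\lambda_{2,i})$ are chosen in the online analogue of the batch domain $\mathcal{D}_v^0$ from~\eqref{eq:batch-domain}, i.e.\ so that $1+\lambda_{1,i}(w-1)+\lambda_{2,i}(wr-v)\ge 0$ at every corner $(w,r)\in\{0,w_{\max}\}\times\{0,1\}$. The key observation is that the function $(w,r)\mapsto 1+\lambda_{1,i}(w-1)+\lambda_{2,i}(wr-v)$ is bilinear, hence affine in $w$ for any fixed $r$ and affine in $r$ for any fixed $w$; thus on the rectangle $[0,w_{\max}]\times[0,1]$ its minimum is attained at a vertex. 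Since $(w_i,r_i)$ lies in that rectangle almost surely, each factor of $K_t(v)$ is $\ge 0$, so $K_t(v)\ge 0$ pathwise.

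Next I would prove the martingale property at $v=V(\pi)$. Let $\mathcal{F}_{t-1}=\sigma(\{(w_j,r_j)\}_{j=1}^{t-1})$. Since $\lambda_{1,t},\lambda_{2,t}$ are $\mathcal{F}_{t-1}$-measurable,
\[
\E[K_t(V(\pi))\mid\mathcal{F}_{t-1}]
= K_{t-1}(V(\pi))\cdot\bigl(1+\lambda_{1,t}\,\E[w_t-1]+\lambda_{2,t}\,\E[w_tr_t-V(\pi)]\bigr).
\]
By~\eqref{eq:change-of-measure} and the identity $\E[w]=1$ established just after it, the two inner expectations vanish, so $\E[K_t(V(\pi))\mid\mathcal{F}_{t-1}]=K_{t-1}(V(\pi))$. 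With $K_0(\cdot)\equiv 1$ (the empty product), this gives a nonnegative martingale with unit initial value.

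Finally I would invoke Ville's inequality: for any nonnegative martingale $M_t$ with $M_0=1$, $\Pr(\exists t:M_t\ge 1/\alpha)\le\alpha$. Applied to $K_t(V(\pi))$, this yields $\Pr(\exists t: V(\pi)\notin C_t)\le\alpha$, proving the first CS claim. For $\mathfrak{C}_t=\bigcap_{i\le t} C_i$, note that $\{\exists t:V(\pi)\notin\mathfrak{C}_t\}=\{\exists i:V(\pi)\notin C_i\}$, which is the same event already bounded by $\alpha$; equivalently $\mathfrak{C}_t$ is tighter than $C_t$ but has the same time-uniform miscoverage event. The supremum over $D,R$ in the definition is automatic because the argument uses only boundedness of $(w,r)$ and the identity $\E_{Q^*}[w-1]=\E_{Q^*}[wr-V(\pi)]=0$, both of which hold under every $(D,R)$.

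The only step with any real content is the nonnegativity, which rests on the bilinearity-at-corners observation above; the martingale identity and Ville's inequality are then routine. The main subtlety to be explicit about in a full write-up is that the online domain for $(\lambda_{1,i},\lambda_{2,i})$ must be stated so that it enforces the corner inequalities pathwise, which is the online analogue of~\eqref{eq:batch-domain}.
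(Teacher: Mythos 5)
Your proposal is correct and follows essentially the same route as the paper: nonnegativity from the constraint set $\mathcal{D}_v^0$, the conditional-expectation computation using predictability of the bets together with $\E[w]=1$ and $\E[wr]=V(\pi)$, and Ville's inequality for both $C_t$ and its running intersection (which the paper packages into a separate lemma). Your explicit bilinearity-at-the-corners justification for why the vertex constraints imply pathwise nonnegativity on all of $[0,w_{\max}]\times[0,1]$ is a small detail the paper leaves implicit, but it is not a different argument.
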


All proofs are in the appendix.  
The process $K_t(v)$
tracks the wealth of a skeptic betting against $V(\pi)=v$. The process
$K_t(V(\pi))$ is a nonnegative martingale so it has a small probability of attaining large values (formally, Ville's inequality states that the probability of ever exceeding $1/\alpha$ is at most $\alpha$). 
Of course, we don't know $V(\pi)$, but if we retain all values of $v$ where the wealth is below $1/\alpha$, and reject the values of $v$ for which it has crossed $1/\alpha$ at some point, this set will always contain $V(\pi)$ with high probability; this is the basis of our construction.
The strength of our approach comes from this result, 
as it guarantees always-valid bounds for $V(\pi)$ using 
only martingale arguments crucially avoiding
parametric or other assumptions.

What about $v \neq V(\pi)$? Can we be sure that $C_t$ does not contain values $v$ very far from $V(\pi)$? That's where the betting strategy, quantified by the predictable sequences $(\lambda_{1,i})$ and $(\lambda_{2,i})$, enters. The hope is the skeptic can eventually 
force $K_t(v)$ to be large via a series of good bets. 
Importantly, Theorem~\ref{thm:cs} holds regardless of how the bets are set,
but good bets will lead to ``small'' $C_t$.
How to smartly bet is the subject of what
follows.

\section{Main Betting Strategy: MOPE}
We develop our main betting strategy, MOPE (Martingale OPE) 
in steps starting with a slow but effective algorithm 
and making changes to trade off a small amount of statistical
efficiency for large gains in computational efficiency.

\subsection{Follow The Leader}
We begin with a Follow-The-Leader
(FTL) strategy that is known 
to work very well for iid problems~\cite{de2014follow}.
We define $\ell_i^v(\lambda)=\ln(1+\lambda_1 (w_i-1)+\lambda_2(w_i r_i -
v))$ and set $\lambda = [\lambda_1, \lambda_2]$ to maximize 
wealth in
hindsight 
\begin{equation}
\lambda_{t}^{\textrm{ftl}}(v) = \argmax_{\lambda} \sum_{i=1}^{t-1}
\ell_{i}^{v}(\lambda)
\label{eq:ftl}
\end{equation}
for every step of betting in $K_t(v)$.  
The problem~\eqref{eq:ftl} is convex and can be solved in
polynomial time leading to an overall polynomial time algorithm. 
However, this approach has three undesirable properties. 
First, the algorithm needs to store
the whole history of $(w,r)$ samples. 
Second the overall algorithm is
tractable but slow.  
Finally, we need to solve~\eqref{eq:ftl} for all
values of $v$ that we have not yet rejected.

\subsection{Maximizing a lower bound on wealth}

We can avoid having to store all history by optimizing an easy-to-maintain
lower bound of \eqref{eq:ftl}.  
\begin{lemma} 
\label{lem:quadbound}
For all $x\geq -\frac{1}{2}$ and $\psi=2-4\ln(2)$, we have
\[
\ln(1+x)\geq x + \psi x^2.
\]
\end{lemma}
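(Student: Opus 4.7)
My plan is to define $f(x)=\ln(1+x)-x-\psi x^2$ and show $f(x)\ge 0$ for all $x\ge -\tfrac12$. The first thing I would do is check the two natural boundary/anchor points: at $x=0$ one has $f(0)=0$ trivially, and at $x=-\tfrac12$ one gets $f(-\tfrac12)=-\ln 2+\tfrac12-\psi/4$, which vanishes exactly when $\psi=2-4\ln 2$. So the choice of $\psi$ is the sharp one making the inequality tight at both endpoints of its eventual ``active'' interval; this both motivates the constant and tells me the proof has to be careful because we are claiming a sharp inequality.

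The natural next step is a derivative analysis. Computing,
\[
f'(x)=\frac{1}{1+x}-1-2\psi x = -x\cdot\frac{1+2\psi+2\psi x}{1+x}.
\]
This factored form makes the critical points transparent: $x=0$ and $x^{\star}=-(1+2\psi)/(2\psi)$. Using $2-4\ln 2<0$ (i.e.\ $\psi<0$) and the numerical fact $1+2\psi=5-8\ln 2<0$, I would verify $x^{\star}\in(-\tfrac12,0)$. Reading the sign of $f'$ off the three factors, I then obtain that $f$ is increasing on $(-\tfrac12,x^{\star})$, decreasing on $(x^{\star},0)$, and increasing on $(0,\infty)$.

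Combining the monotonicity with $f(-\tfrac12)=f(0)=0$, the function starts at $0$ at $x=-\tfrac12$, rises to a positive local max at $x^{\star}$, returns to $0$ at $x=0$, and then rises again for $x>0$ (where $-\psi x^2$ is positive and dominates eventually). Hence $f(x)\ge 0$ on $[-\tfrac12,\infty)$, with equality only at the two anchor points.

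The only mildly delicate step is the numerical verification that $1+2\psi<0$ and that $x^{\star}\in(-\tfrac12,0)$; both reduce to the inequality $\ln 2>5/8$, which is immediate (e.g.\ $\ln 2\approx 0.693$). Everything else is routine calculus on an explicit one-variable function, so I expect no real obstacle beyond bookkeeping the signs of the three factors in $f'$.
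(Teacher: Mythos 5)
Your proposal is correct and takes essentially the same route as the paper: both define $f(x)=\ln(1+x)-x-\psi x^2$, identify the two critical points $0$ and $-(1+2\psi)/(2\psi)$, and conclude nonnegativity from the values at the boundary $x=-\tfrac12$, at the critical points, and the growth as $x\to\infty$. Your explicit sign analysis of the factored $f'$ is a slightly more detailed version of the same argument (note only that locating $x^{\star}>-\tfrac12$ uses $\ln 2<3/4$ rather than $\ln 2>5/8$, an equally trivial check).
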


Observe that if we restrict our bets to lie in the convex set
$\mathcal{D}_v^{1/2}$ (cf.~eq.~\eqref{eq:batch-domain}) then for all $\lambda
\in \mathcal{D}_v^{1/2}$
\begin{align*}
\sum_{i=1}^{t-1} \ell_i^v(\lambda) 
\geq
\lambda^\top \sum_{i=1}^{t-1} b_i(v) 
+\psi \lambda^\top \left(\sum_{i=1}^{t-1} A_i(v)\right) \lambda
\end{align*}
where 
$b_i(v)=
\left[\begin{array}{c} 
w_i-1 \\ w_i r_i -v 
\end{array}\right] 
$
and 
$
A_i(v) = b_i(v)b_i(v)^\top.
$
The first step towards a more efficient algorithm is to set our bets at time
$t$ as
\begin{equation}
\lambda_t(v) = \argmax_{\lambda \in \mathcal{D}_{v}^{1/2}}
\psi  \lambda^\top \left(\sum_{i=1}^{t-1} A_i(v)\right) \lambda 
+ \lambda^\top \sum_{i=1}^{t-1} b_i(v)
\label{eq:quadoptv}
\end{equation}
The restriction $\lambda \in \mathcal{D}_{v}^{1/2}$ is very mild: it does not
allow the skeptic to lose more than half of
her wealth from any single outcome.  The first advantage of this formulation is
that $\sum_i A_i(v)$ and $\sum_i b_i(v)$ are low degree polynomials of $v$ and
can share the coefficients
    \begin{align*}
        \sum_{i=1}^{t-1} A_i(v) &= 
        A_t^{(0)} + v A_t^{(1)} + v^2 A_t^{(2)}\\   
        \sum_{i=1}^{t-1} b_i(v) &= b_t^{(0)} + v b_t^{(1)}.  
    \end{align*}
Secondly, the coefficients can be updated incrementally
\allowdisplaybreaks
    \begin{align}
        A_t^{(0)} &=\sum_{i=1}^{t-1}\symmat{(w_i-1)^2}{(w_i-1)w_i r_i}{w_i^2r_i^2}, \label{eq:upsuffa0}\\
        A_t^{(1)} &= \sum_{i=1}^{t-1} \symmat{0}{-(w_i-1)}{-2w_ir_i},\\
        A_t^{(2)} &=\sum_{i=1}^{t-1}  \symmat{0}{0}{1},\\
        b_t^{(0)} &=\sum_{i=1}^{t-1}  \cvec{w_i-1}{w_ir_i},\\
        b_t^{(1)} &=\sum_{i=1}^{t-1}  \cvec{0}{-1}. \label{eq:upsuffb1}
    \end{align}
Finally, we can solve~\eqref{eq:quadoptv} exactly in $O(1)$ time.
Section~\ref{sec:avoid-grid} will elaborate on this using a slight variation of
eq.~\eqref{eq:quadoptv}.

\subsection{Common Bets and Hedging}
\label{sec:hedged}
The most competitive betting sequences for the process $K(v)$ will take
advantage of the knowledge of $v$. However, placing different bets for
different values of $v$ creates two problems: First, the resulting confidence
set need not be an interval and second makes it hard to implement
Theorem~\ref{thm:cs} in a computationally efficient way.  Indeed, even in the
simpler setup of \citet{waudby-smith_variance-adaptive_2020} the authors
maintain a grid of test values for the quantity of interest (here $v$)  and at
least keep track of the wealth separately.  This is because tracking the wealth
for each value in the grid is not straightforward when the bets are different. 

To make wealth tracking easy and obtain algorithms that do not require the
discretization of the domain of $v$, a natural proposal would be to use a
common bet for all $v$ in each timestep. Unfortunately, this is not adequate
because we do need $\lambda_2 > 0$ for $v<\E_{Q^*}[wr]$ and $\lambda_2 < 0$ for
$v>\E_{Q^*}[wr]$.  A simple fix is to use a hedged strategy as
in~\citet{waudby-smith_variance-adaptive_2020}.  First, we split the initial
wealth equally.  The first half is used to bet against low $v$'s via the process
\[
K_t^{+}(v) = \prod_{i=1}^t \left(1+\lambda_{1,i}^{+}(w_i-1)+\lambda_{2,i}^{+}(w_i r_i -v)\right)
\]
and the second half to bet against high $v$'s via a separate process
$K_t^{-}(v)$ which for symmetry we parametrize as
\[
K_t^{-}(v) = \prod_{i=1}^t \left(1+\lambda_{1,i}^{-}(w_i-1)+\lambda_{2,i}^{-}(w_i r'_i -v')\right).
\]
where $r'_i=1-r_i$ and $v'=1-v$.  This can be seen as the wealth process for
betting against $1-v$ in an world where $r$ has been remapped to $1-r$.  Thus
betting against high values of $v$ reduces to betting against low values of $v$
in a modified process. The total wealth of the hedged process is
\begin{equation}
K_t^{\pm}(v) = \frac{1}{2} (K_t^{+}(v) + K_t^{-}(v)),
\label{eq:hedged}
\end{equation}
and it can be used for CSs in the same way as $K_t(v)$:
\begin{theorem}
\label{thm:hedged}
The sequence $C_t^{\pm} = \{v:K_t^{\pm}(v)\leq \frac{1}{\alpha}\}$ and its 
running intersection $\bigcap_{i=1}^t C_i^{\pm}$
are $1-\alpha$ CSs for $V(\pi)$.
\end{theorem}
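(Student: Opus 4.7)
The plan is to reduce Theorem~\ref{thm:hedged} to Theorem~\ref{thm:cs}. It suffices to show that $K_t^{\pm}(V(\pi))$ is a nonnegative martingale with initial value $1$ with respect to the filtration $\mathcal{F}_i = \sigma(\{(w_j,r_j)\}_{j=1}^i)$, and then apply Ville's inequality exactly as was done for $K_t$ in Theorem~\ref{thm:cs}.

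First, I would note that Theorem~\ref{thm:cs} already implies that $K_t^{+}(V(\pi))$ is a nonnegative martingale starting at $1$: the bets $(\lambda_{1,i}^{+},\lambda_{2,i}^{+})$ are predictable, the constraint that keeps $K_t^{+}$ nonnegative plays the role of $\mathcal{D}_v^0$, and the iid assumption gives $\E[w_i-1\mid \mathcal{F}_{i-1}] = 0$ and $\E[w_i r_i - V(\pi)\mid \mathcal{F}_{i-1}] = 0$. Next I would verify that the same statement holds for $K_t^{-}(V(\pi))$ under the remapping $r'_i = 1-r_i$, $v' = 1-V(\pi)$. The only thing to check is the centering of the reward increment:
\[
\E[w_i r'_i - (1-V(\pi))] = \E[w_i] - \E[w_i r_i] - 1 + V(\pi) = 1 - V(\pi) - 1 + V(\pi) = 0,
\]
while the $w_i-1$ increment is unchanged. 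Hence $K_t^{-}(V(\pi))$ is also a nonnegative martingale starting at $1$.

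Third, I would invoke the fact that a convex combination of nonnegative martingales (with respect to the same filtration) is a nonnegative martingale. Since $K_0^{+}(V(\pi)) = K_0^{-}(V(\pi)) = 1$, we get $K_t^{\pm}(V(\pi))$ is a nonnegative martingale with $K_0^{\pm}(V(\pi)) = 1$. Ville's inequality then yields
\[
\Pr\!\left(\exists t \in \mathbb{N} : K_t^{\pm}(V(\pi)) > 1/\alpha \right) \leq \alpha,
\]
which is exactly the statement that $C_t^{\pm} = \{v : K_t^{\pm}(v) \leq 1/\alpha\}$ is a $(1-\alpha)$-CS. The running intersection claim is then immediate: the event that $V(\pi) \notin \bigcap_{i=1}^t C_i^{\pm}$ for some $t$ is identical to the event that $V(\pi) \notin C_i^{\pm}$ for some $i$, which already has probability at most $\alpha$.

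There is essentially no obstacle here; the only point requiring any thought is the sign/reward flip used to build $K_t^{-}$, which is resolved by the one-line centering calculation above. The rest is an appeal to linearity of conditional expectation and Ville's inequality, mirroring Theorem~\ref{thm:cs}.
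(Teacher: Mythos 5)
Your proposal is correct and follows essentially the same route as the paper's proof: show $K_t^{+}(V(\pi))$ and $K_t^{-}(V(\pi))$ are each nonnegative martingales with initial value $1$ (the key point being the centering of $w_i(1-r_i)-(1-V(\pi))$, which you verify correctly), conclude the same for their average, and apply Ville's inequality for both the sets and their running intersection. The only cosmetic difference is that the paper expands $w_t(1-r_t)-(1-V(\pi))$ into $(w_t-1)-(w_tr_t-V(\pi))$ and regroups the bets, whereas you check the conditional mean of the remapped increment directly; both are equivalent.
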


It remains to design a common bet for $K_t^{+}(v)$.  Betting against any fixed
$v_0$ will not work well when $V(\pi)=v_0$ since the optimal bet for $V(\pi)$
is 0 but such a bet cannot help us reject those $v$ that are far from $V(\pi)$.
Therefore we propose to adaptively choose the bets against the smallest $v$
that has not been rejected. As we construct the CS, we have access to the values of $v$ that constitute the
endpoints of the CS at the last time step. These values are on
the cusp of plausibility given the available data and confidence level which
means the bets are neither too conservative nor too detached from what can be
estimated.

\subsection{Avoiding grid search}
\label{sec:avoid-grid}
Once we have determined $v$ for the current step we could choose $\lambda$ via
\eqref{eq:quadoptv}. For reasons that will become apparent shortly, we can also
consider
\begin{equation}
\lambda_t = \argmax_{\lambda \in \mathcal{C}}
\psi  \lambda^\top \left(\sum_{i=1}^{t-1} A_i(v)\right) \lambda 
+ \lambda^\top \sum_{i=1}^{t-1} b_i(v),
\label{eq:quadopt}
\end{equation}
where $\mathcal{C}=\{\lambda: \lambda_2 \geq 0\}\cap \bigcap_{v\in [0,1]} \mathcal{D}_v^{1/2}$
or more succintly
$$
\mathcal{C} = \left\{\lambda: \lambda_2\geq 0, 
\lambda_1 + \lambda_2 \leq \frac{1}{2},
\lambda_1 \left(1-w_{\max}\right) + \lambda_2 \leq  \frac{1}{2}
\right\}.
$$
The constraint $\lambda_2\geq 0$ is expected for good bets in $K_t^+(v)$ (and
by reduction in $K_t^-(v)$) since we are eliminating $v$'s with $\E[wr-v]>0$.
Since there are only three constraints and two variables we can exactly solve
\eqref{eq:quadopt} very efficiently. Our implementation first tries to
return the unconstrained maximizer, if feasible.  If not, we evaluate the
objective on up to 6 candidates: up to one candidate per face of $\mathcal{C}$
(obtained via maximizing the objective subject to one equality constraint) and
its 3 vertices. Algorithm~\ref{alg:argmax} summarizes this.

\begin{algorithm}[tb]
   \caption{Solve $\lambda^* = \argmax_{\lambda \in \mathcal{C}} \psi \lambda^\top A \lambda + \lambda^\top b$}
   \label{alg:argmax}
\begin{algorithmic}
    \STATE {\bfseries Input:} $A, b$
    \STATE $\lambda = -(2\psi A)^{-1} b$
    \IF {$\lambda \in \mathcal{C}$}
        \STATE \textbf{Return} $\lambda$
    \ENDIF
    \STATE $\Lambda = \left\{\left[\frac{1}{2(1-w_{\max})},0\right],\left[\frac12,0\right],\left[0,\frac12\right]\right\}$     \COMMENT{vertices of $\mathcal{C}$}
    \FOR{$c,d \in \{([0,1],0), ([1,1],\frac12), ([1-w_{\max},1],\frac{1}{2})\}$}
    \STATE $\mu = -\frac{c^\top (2\psi A)^{-1}b+d}{c^\top (2\psi A)^{-1}c}$
    \COMMENT {Lagrange multiplier}
    \STATE $\lambda = -(2\psi A)^{-1} (b + \mu c)$ 
    \IF {$\lambda \in \mathcal{C}$}
    \STATE $\Lambda = \Lambda \cup \{\lambda\}$ 
    \COMMENT{Add feasible solutions on faces}
    \ENDIF 
    \ENDFOR
    \STATE \textbf{Return}  $\argmax_{\lambda \in \Lambda} \psi \lambda^\top A \lambda + \lambda^\top b$
\end{algorithmic}
\end{algorithm}

Given $\lambda_1,\ldots,\lambda_{t-1}$, from \eqref{eq:quadopt} 
we get from Lemma~\ref{lem:quadbound} 
\[
\sum_{i=1}^{t-1} \ell_i^v(\lambda_i)
\geq
\psi  
 \sum_{i=1}^{t-1} \lambda_i^\top A_i(v)\lambda_i  +  \sum_{i=1}^{t-1} \lambda_i^\top b_i(v)
\]
for all $v \in [0,1]$. Thus, if the lower bound exceeds $\ln(1/\alpha)$ 
for a particular $v$, the log wealth will also exceed it. Furthermore,
the lower bound is quadratic in $v$ so we can easily find those values
$v \in [0,1]$ such that
\begin{equation}
   \psi  
 \sum_{i=1}^{t-1} \lambda_i^\top A_i(v)\lambda_i  +  \sum_{i=1}^{t-1} \lambda_i^\top b_i(v) = \ln\left(\frac{2}{\alpha}\right).
 \label{eq:quadv}
\end{equation} 
The extra $2$ is due to the hedged process.  Appendix~\ref{app:nogrid2d}
explains this and the details of how to incrementally maintain statistics for solving
\eqref{eq:quadv} via eqs.~\eqref{eq:upsuffc}-\eqref{eq:upsuffu}.  The advantage
of \eqref{eq:quadopt} over \eqref{eq:quadoptv} is that the latter cannot ensure
that old bets will produce values in $\mathcal{D}_v^{1/2}$ for future values of
$v$ while the former always does because $\mathcal{C} \subseteq
\mathcal{D}_{v}^{1/2},~ \forall v \in [0,1]$. 

The whole process of updating the statistics, tightening the lower bound $v$
via \eqref{eq:quadv} and computing the new bets via \eqref{eq:quadopt} is
summarized in Algorithm~\ref{alg:main}.

\begin{algorithm}[tb]
   \caption{MOPE: Martingale Off-Policy Evaluation}
   \label{alg:main}
\begin{algorithmic}
    \STATE {\bfseries Input:} process $Z=(w_i,r_i)_{i=1}^\infty, w_{\max}, \alpha$
    \STATE Let $Z' = (w_i,1-r_i)$ for $(w_i,r_i)$ in $Z$
    \FOR{$v_i,v_i'$ in zip(LCS($Z$), LCS($Z'$))}
        \STATE Output($v_i,1-v_i'$)
   \ENDFOR
\FUNCTION{LCS($Z$)}{
    \STATE $\lambda_1 = [0,0]^\top, v = 0$
    \FOR{$i=1,\ldots$}
        \STATE Observe $(w_i,r_i)$ from $Z$
        \STATE Update statistics via \eqref{eq:upsuffa0}-\eqref{eq:upsuffb1} and \eqref{eq:upsuffc}-\eqref{eq:upsuffu}.
        \IF{\eqref{eq:quadv} has real roots}
            \STATE $v=\max(v, \textrm{largest root of }\eqref{eq:quadv})$
        \ENDIF
        \STATE \textbf{yield} $v$
        \COMMENT{execution suspends/resumes here}
        \STATE $A = A_i^{(0)} +v A_i^{(1)} + v^2 A_i^{(2)}$
        \STATE $b = b_i^{(0)} +v b_i^{(1)}$
        \STATE  $\lambda_{i+1} = \argmax_{\lambda \in C} \psi  \lambda^\top A \lambda+  b^\top \lambda$.
    \ENDFOR
    }
\ENDFUNCTION
\end{algorithmic}
\end{algorithm}

\paragraph{Confidence Intervals.} If one only desires a single CI using a fixed batch of data, then a CI can be formed by returning the last set from the CS on any permutation of the data.  To reduce variance, we can average the wealth of several independent permutations without violating validity.

\paragraph{Alternative Betting Algorithms}
An obvious question is why develop this strategy 
and not just feed the 
convex functions $-\ell_i^v(\lambda)$ to an online 
learning algorithm? 
The Online Newton Step
(ONS)~\cite{hazan2007logarithmic}
is particularly well-suited
as $-\ell_i^v(\lambda)$ is exp-concave.
While ONS does not require storing all history and needs
small per-step computation, we could not find an 
efficient way to efficiently reason about 
$K_t(v)$ for every $v \in [0,1]$.
While the ONS bounds the log wealth in terms
of the gradients observed at each bet, these gradients depend on $v$ in a way
that makes it hard to efficiently reuse for different values of $v$. 
Our approach on the other hand maintains a lower bound on the wealth 
as a second degree polynomial in $v$, enabling us to reason about 
all values of $v$ in constant time. 

\section{Extensions}

\subsection{Adding a Reward Predictor}
\label{sec:reward-predictor}
So far we have only used the special case $\E[w]=1$ of
eq.~\eqref{eq:control-variate}. However, it is common to have access to a
reward predictor $q(x,a)$ mapping context and action to an estimated reward.
Here we will just assume that $q(x,a)$ is any measurable function of $(x,a)$
with codomain $[0,1]$. We use eq.~\eqref{eq:control-variate} to define the 
zero mean quantity
\begin{equation}
c_i=w_i q(x_i,a_i)-\sum_{a'}\pi(a';x_i)q(x_i,a').    
\label{eq:ci-defn}
\end{equation}
Thus $\E[wr-c]=\E[wr]$ but when $q(x_i, a_i)$ is a good 
reward predictor, the variance of $wr - c$ will be much 
smaller than that of $wr$. We propose the wealth process
\[
K_t^q(v)=\prod_{i=1}^{t}\left(1+\lambda_{1,i} (w-1)+\lambda_{2,i}(w_i r_i-c_i-v)\right)
\]
for predictable sequences 
$(\lambda_{1,i}, \lambda_{2,i}) \in \mathcal{E}_{v}^0$, where
\begin{align*}
\mathcal{E}_v^{m} =
\{(\lambda_1,\lambda_2):  1+\lambda_1(w-1)+\lambda_2(wr-c-v)\geq m \notag\\
\forall (x,a,r,q) \in \supp(D) \times \mathcal{A} \times \{0,1\} \times [0,1]^{|\mathcal{A}|}
\}.
\end{align*}
Note that $w=w(x, a)$ and $c=c(x,a,q)$ so all quantities are well defined.
This set looks daunting but without loss of generality it suffices to
only consider two 
actions: $a$, which is sampled by $h$, and an alternative one $a'$,
$h(a) \in \{1/w_{\max},1\}$, and $\pi(a),q(x,a),q(x,a') \in \{0,1\}$.
Considering all these combinations and removing redundant constraints 
leads to the equivalent description for $\mathcal{E}_v^{m}$ as
\begin{align}
\tiny
\left\{\lambda:  
\left[\begin{array}{cccc}
-1 & -1 & W & W \\ 
-v & v' & -W-v & W+v'
\end{array}\right]^\top
\lambda \geq m - 1 \label{eq:rp-explicit-domain}
\right\},
\end{align}
where $W = w_{\max}-1$ and $v'=1-v$.

For an efficient procedure we introduce the set 
$\mathcal{C}^q=\bigcap_{v \in [0,1]}\mathcal{E}_v^{1/2}$
to enable the use of our lower bound and common bets for all $v$.
This set can be shown to be the same as \eqref{eq:rp-explicit-domain}
but with $v=1$ and $v'=1$. For predictable sequences of 
bets $\lambda_{t}^{+q},\lambda_{t}^{-q}\in \mathcal{C}^q$ 
define the processes
\begin{align*}
K_t^{+q}(v) = \prod_{i=1}^{t}\left(1+\lambda_{1,i}^{+q} (w_i-1)+\lambda_{2,i}^{+q}(w_i r_i-c_i-v)\right),    \\
K_t^{-q}(v) = \prod_{i=1}^{t}\left(1+\lambda_{1,i}^{-q} (w_i-1)+\lambda_{2,i}^{-q} (w_i r_i'-c_i'-v')\right), 
\end{align*}
where $r_i'=1-r_i$, $v'=1-v$. For the definition of
$c_i'$ we reason as follows: If $q(x,a)$ is a good reward predictor
for $r_i$ then $q'(x,a)=1-q(x,a)$ is a good reward predictor for $r_i'$. 
Plugging $q'(x,a)$ in place of $q(x,a)$ in \eqref{eq:ci-defn} leads 
to $c_i' = w_i - 1 - c_i$. Finally, the hedged process is just
$
K_t^{\pm q}(v)=\frac{1}{2} (K_t^{+q}(v)+K_t^{-q}(v))
$
and we have
\begin{theorem}
\label{thm:reward-predictor}
The sequences $C_t^{q} = \{v:K_t^{q}(v)\leq \frac{1}{\alpha}\}$ and 
$C_t^{\pm q} = \{v:K_t^{\pm q}(v)\leq \frac{1}{\alpha}\}$ as well as
their running intersections $\bigcap_{i=1}^t C_i^{q}$ and
$\bigcap_{i=1}^t C_i^{\pm q}$ are $1-\alpha$ CSs for $V(\pi)$.
\end{theorem}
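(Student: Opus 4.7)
The plan is to reduce everything to Ville's inequality applied to two nonnegative martingales: $K_t^q(V(\pi))$ and $K_t^{\pm q}(V(\pi))$. The first step is to show that a single factor of $K_t^q(V(\pi))$ has conditional mean $1$ given $\mathcal{F}_{i-1}=\sigma(\{(x_j,a_j,r_j)\}_{j<i})$. Since the bets $\lambda_{1,i},\lambda_{2,i}$ are $\mathcal{F}_{i-1}$-measurable, it suffices to verify three unbiasedness identities under the iid sampling: $\E[w_i-1\mid\mathcal{F}_{i-1}]=0$ (the change-of-measure identity \eqref{eq:change-of-measure} with $q\equiv 1$), $\E[w_i r_i\mid\mathcal{F}_{i-1}]=V(\pi)$ (\eqref{eq:change-of-measure} directly), and $\E[c_i\mid\mathcal{F}_{i-1}]=0$, which is exactly \eqref{eq:control-variate} applied to the reward predictor $q$ in \eqref{eq:ci-defn}. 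Nonnegativity at $v=V(\pi)$ follows from the assumption $(\lambda_{1,i},\lambda_{2,i})\in\mathcal{E}_{V(\pi)}^0$, since that is precisely the defining inequality of $\mathcal{E}_v^m$ at $m=0$.

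For the hedged process $K_t^{\pm q}(V(\pi))$, I would argue that each half is a nonnegative martingale and invoke the fact that an average of two nonnegative mean-$1$ martingales is again one. For $K_t^{+q}(V(\pi))$ the previous paragraph applies verbatim. For $K_t^{-q}(V(\pi))$ I would set $r_i'=1-r_i$, $v'=1-V(\pi)$ and $q'(x,a)=1-q(x,a)$; then $\E[w_i r_i']=\E[w_i]-\E[w_i r_i]=1-V(\pi)=v'$, and the control-variate identity \eqref{eq:control-variate} applied to $q'$ yields $\E[c_i']=0$, with the stated formula $c_i'=w_i-1-c_i$ following by direct substitution of $1-q$ into \eqref{eq:ci-defn}. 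Nonnegativity of each factor is immediate because $\mathcal{C}^q\subseteq\mathcal{E}_v^{1/2}$ for every $v\in[0,1]$, so each factor is actually bounded below by $1/2$.

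Having established both martingale properties, I would apply Ville's inequality to $K_t^q(V(\pi))$ and to $K_t^{\pm q}(V(\pi))$ to obtain $\Pr(\exists t: K_t(V(\pi))>1/\alpha)\leq\alpha$ in both cases, which is precisely the statement that $V(\pi)$ belongs to every $C_t^q$ (resp.\ $C_t^{\pm q}$) with probability at least $1-\alpha$. The running-intersection CSs inherit the guarantee because the event $\{V(\pi)\in C_t\text{ for all }t\}$ coincides with $\{V(\pi)\in\bigcap_{i\leq t}C_i\text{ for all }t\}$, so no additional union bound is needed.

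The main obstacle I anticipate is not conceptual but bookkeeping: cleanly verifying that $c_i'=w_i-1-c_i$ (a one-line algebraic substitution) and that the explicit description \eqref{eq:rp-explicit-domain} of $\mathcal{E}_v^m$ is equivalent to the universally quantified definition. The latter reduces to a small linear-programming exercise: since $1+\lambda_1(w-1)+\lambda_2(wr-c-v)$ is linear in $(w,r,q(x,a),q(x,a'))$ once we fix $\pi(a;x)$, its minimum over the support is attained at the extreme points, namely $h(a)\in\{1/w_{\max},1\}$, $r\in\{0,1\}$, and $q\in\{0,1\}^{|\mathcal{A}|}$, with only the action $a$ taken by $h$ and any single alternative $a'$ mattering after eliminating redundant constraints. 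Once these two pieces are in hand, the proof is essentially identical to Theorems~\ref{thm:cs} and~\ref{thm:hedged}, with $w_ir_i$ consistently replaced by $w_ir_i-c_i$.
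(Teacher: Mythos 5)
Your proposal is correct and follows essentially the same route as the paper: establish that each factor of $K_t^q(V(\pi))$ has conditional mean $1$ using $\E[w]=1$, $\E[wr]=V(\pi)$, and $\E[c_i\mid\mathcal{F}_{i-1}]=0$ from \eqref{eq:control-variate}, handle $K_t^{-q}$ by the remapping $r'=1-r$, $q'=1-q$, $c'=w-1-c$, $v'=1-v$, and then apply Ville's inequality to the resulting nonnegative martingales (the paper packages this last step, including the running-intersection claim, as Lemma~\ref{lem:main}). The only cosmetic difference is that you inline the Ville/running-intersection argument rather than citing the shared lemma.
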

Appendix~\ref{app:reward-predictors} contains the details on how to bet.
We close this section with two remarks. First, a ``bad'' $q(x,a)$ can
make $wr-c$ have larger variance than $wr$. To protect against this case
we can run a \emph{doubly hedged} process: 
$K_{t}^{\pm^2}(v)=\frac{1}{2}
(K_{t}^{\pm q}(v)+K_{t}^{\pm}(v))$
which will accrue wealth almost as well as the best
of its two components.
Second our framework allows for $q(x,a)$ to be updated in every step as long as the updates are predictable. 

\subsection{Scalar Betting}
\label{sec:scalar}
Since $\E[w]=1$, it would seem that the $\lambda_1$ bet cannot have any long
term benefits. While this will be shown to be false 
in our experiments we nevertheless develop a betting strategy that only
bets on $w_i r_i -v$. The advantages of this strategy are computational 
and conceptual simplicity.
Similarly to Section~\ref{sec:hedged} we use a hedged
process $K_t^{\gtrless}(v)=\frac{1}{2}\left(K_t^{>}(v)+K_t^{<}(v)\right)$ where
\begin{align*}
K_t^{>}(v)&=\prod_{i=1} \left(1+\lambda_{2,i}^{>} (w_i r_i -v)\right),\\
K_t^{<}(v)&=\prod_{i=1} \left(1+\lambda_{2,i}^{<} \left(w_i (1-r_i) -(1-v)\right)\right).
\end{align*}
Appendix~\ref{app:betaopt} provides an alternative justification via a worst
case argument.  The upshot is that $\lambda_1=\max(0,-\lambda_2)$ is a
reasonable choice and it leads to the above processes.

We explain betting for $K_t^{>}(v)$, since betting for $K_t^{<}(v)$ reduces to
that.  We use a result by \citet{fan2015exponential}:
\[
\ln(1+\lambda \xi) \geq \lambda \xi+\left(\ln\left(1-\lambda\right)+\lambda\right)\cdot \xi^{2}
\]
for all $\xi\geq -1$ and $\lambda \in [0,1)$, which we reproduce in
Appendix~\ref{app:fan}.  We apply it in our case with $\xi_i=w_ir_i-v\geq -1$
and consider the log wealth lower bound for a fixed $\lambda_2$
\[
\ln(K_t^{>}(v)) \geq \lambda_2 \sum_{i=1}^{t-1} \xi_i + \left(\ln\left(1-\lambda_2\right)+\lambda_2\right) \sum_{i=1}^{t-1} \xi_i^2.
\]
When $\sum_{i=1}^{t-1} \xi_i^2>0$ the lower bound is concave and can 
be maximized in $\lambda_2$ by setting its derivative to 0. This gives
\[
\lambda_{2,t}^{>}(v) = \frac{\sum_{i=1}^{t-1} (w_i r_i -v)}{\sum_{i=1}^{t-1} (w_i r_i -v)+\sum_{i=1}^{t-1} (w_i r_i -v)^2}.
\]
When $\sum_{i=1}^{t-1} \xi_i^2=0$ we can set $\lambda_{2,t}^{>}(v)=0$.
Finally, employing the same ideas as Section~\ref{sec:avoid-grid} we can
adaptively choose the $v$ to bet against and avoid maintaining a grid of values
for $v$. Details are in Appendix~\ref{app:nogrid1d}

\subsection{Gated Deployment}
A common OPE use case is to estimate the difference $V(\pi) - V(h)$. If we
can reject all negative values (i.e.\ the lower CS crosses 0) then $\pi$ should be deployed. Conversely,
rejecting all positive values (i.e.\ the upper CS crosses 0) means $\pi$ should be discarded. Since $h$ is the policy collecting the data we have $V(h)=\E[r]$. Thus we can form a CS 
around $V(\pi) - V(h)$ by considering the process:
\[
K_t^{gd}(v) = \prod_{i=1}^t \left(1+\lambda_{1,i} (w_i-1) + \lambda_{2,i}(w_ir_i -r_i -v)\right)  
\]
for predictable $\lambda_{1,i},\lambda_{2,i}$ subject to $(\lambda_{1,i},\lambda_{2,i}) \in \mathcal{G}_{v}^0$
where
\begin{align}
\label{eq:gd-domain}
\mathcal{G}_{v}^m = \{(\lambda_{1},\lambda_{2}): & 1+\lambda_1(w-1)+\lambda_2(wr-r)\geq m \notag\\
                         & \forall (w,r) \in \{0,w_{\max}\}\times \{0,1\}
\}.
\end{align}
As before, we can form a hedged process and restrict bets to a set that
enables the use of our lower bound. We defer these details to 
appendix~\ref{app:gd}. We can then show
\begin{theorem}
\label{thm:gated}
The sequences $C_t^{gd} = \{v:K_t^{gd}(v)\leq \frac{1}{\alpha}\}$ and $\bigcap_{i=1}^t C_i^{gd}$
are $1-\alpha$ CSs for $V(\pi)-V(h)$.
\end{theorem}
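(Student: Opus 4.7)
The plan is to mirror the argument of Theorem~\ref{thm:cs}, adapted to the estimand $V(\pi)-V(h)$ and the augmented increments $w_i r_i - r_i - v$. Set $v^{*} = V(\pi) - V(h)$ and let $\mathcal{F}_{i} = \sigma(\{(x_j,a_j,r_j)\}_{j=1}^{i})$. I would first verify that, at $v = v^{*}$, the per-step factor has conditional mean $1$: by the importance sampling identity we have $\E[w_i - 1 \mid \mathcal{F}_{i-1}] = 0$, and since the logged data are collected under $h$ we have $\E[w_i r_i \mid \mathcal{F}_{i-1}] = V(\pi)$ and $\E[r_i \mid \mathcal{F}_{i-1}] = V(h)$, so $\E[w_i r_i - r_i - v^{*} \mid \mathcal{F}_{i-1}] = 0$. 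Combined with the predictability of $\lambda_{1,i},\lambda_{2,i}$, this yields
\[
\E\bigl[K_t^{gd}(v^{*}) \mid \mathcal{F}_{t-1}\bigr] = K_{t-1}^{gd}(v^{*}),
\]
so $K_t^{gd}(v^{*})$ is a martingale with $K_0^{gd}(v^{*})=1$.

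Next I would establish nonnegativity via the constraint $(\lambda_{1,i},\lambda_{2,i}) \in \mathcal{G}_{v^{*}}^{0}$. Although $\mathcal{G}_{v}^{m}$ is defined by only the four corners of $\{0,w_{\max}\}\times\{0,1\}$, a short calculation shows this suffices on the full box $[0,w_{\max}]\times[0,1]$: the factor equals $1 + (w-1)(\lambda_1 + \lambda_2 r)$, which is linear in $w$ for fixed $r$ and linear in $r$ for fixed $w$, so its minimum on the box is attained at a vertex. Hence each factor is nonnegative almost surely, and $K_t^{gd}(v^{*})$ is a nonnegative martingale starting at $1$.

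Applying Ville's inequality to $K_t^{gd}(v^{*})$ yields
\[
\Pr\bigl(\exists\, t \in \mathbb{N} : K_t^{gd}(v^{*}) \geq 1/\alpha \bigr) \leq \alpha.
\]
The event $\{\exists t: v^{*} \notin C_t^{gd}\}$ is exactly $\{\exists t: K_t^{gd}(v^{*}) > 1/\alpha\}$, so $(C_t^{gd})_t$ is a $(1-\alpha)$-CS for $V(\pi)-V(h)$. For the running intersection, note that $v^{*} \in \bigcap_{i \leq t} C_i^{gd}$ for all $t$ if and only if $v^{*} \in C_t^{gd}$ for all $t$; the two events coincide, so the running intersection inherits the same guarantee.

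The main obstacle is the nonnegativity check, and more generally ensuring that the enforced constraint on a finite set of corners is equivalent to nonnegativity on the full support; once this is in hand the martingale identity and Ville step are essentially identical to the proof of Theorem~\ref{thm:cs}. A secondary subtlety is that the statement does not specify $\mathcal{G}_{v^{*}}^{0}$ as the bet domain in the theorem itself, so I would be explicit that the theorem implicitly uses predictable $(\lambda_{1,i},\lambda_{2,i}) \in \mathcal{G}_{v^{*}}^{0}$ (as in \eqref{eq:gd-domain}), paralleling the role of $\mathcal{D}_v^{0}$ in Theorem~\ref{thm:cs}.
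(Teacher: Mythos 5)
Your proof is correct and follows essentially the same route as the paper: show $K_t^{gd}(V(\pi)-V(h))$ is a nonnegative martingale with initial value $1$ using $\E[w-1]=0$, $\E[wr]=V(\pi)$, $\E[r]=V(h)$ and predictability of the bets, then apply Ville's inequality to both $C_t^{gd}$ and its running intersection (the paper routes this last step through its Lemma~\ref{lem:main}). Your explicit bilinearity argument that the vertex constraints of $\mathcal{G}_v^0$ imply nonnegativity on the whole box $[0,w_{\max}]\times[0,1]$ is a detail the paper simply asserts, and is a welcome addition rather than a different approach.
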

This CS has two advantages 
over a classical A/B test. First, we don't have to 
choose a stopping time in advance. The CS can run for
as little or as long as necessary. Second, if $\pi$ were
worse than $h$ the A/B test would have an adverse effect
on the quality of the overall system, while here we can
reason about this degradation without deploying $\pi$.

\section{Related Work}
\label{sec:related}

Apart from IPS, other popular OPE estimators include Doubly Robust \cite{RnR,
dudik2011doubly} which incorporates \eqref{eq:control-variate} as an additive
control variate and SNIPS \cite{swaminathan2015self} which incorporates
$\E[w]=1$ as a multiplicative control variate.  The quest to balance the
bias-variance tradeoff in OPE has led to many different 
proposals \cite{SWITCH,vlassis2019design}.
EL-based estimators are proposed in~\citet{kallus2019intrinsically} and
\citet{karampatziakis2019empirical}.

CIs for OPE include both finite-sample~\cite{thomas2015high} and
asymptotic~\cite{li2015counterfactual,karampatziakis2019empirical} ones. Some
works that propose both types are~\citet{bottou2013counterfactual} and
\citet{dai2020coindice}. The latter obtains CIs without knowledge of $w$, a much
more challenging scenario that requires additional assumptions.

We are not aware of any CSs for OPE.  For on-policy setups, the most
competitive CSs all rely on exploiting (super)martingales, and in some sense
all admissible CSs have to~\cite{ramdas2020admissible}.  Examples include Robbins' mixture
martingale \cite{robbins_statistical_1970} and the techniques of
\citet{howard_uniform_2019}. The recent work
of~\citet{waudby-smith_variance-adaptive_2020} which 
leverages a betting view substantially
increases the scope of these techniques, while
simplifying and tightening the constructions. Similar betting 
ideas have been recently used in the development of 
parameter-free online algorithms \cite{OrabonaP16}.

\section{Experiments}

\begin{figure}[h!]
    \centering
    \includegraphics[width=0.75\linewidth]{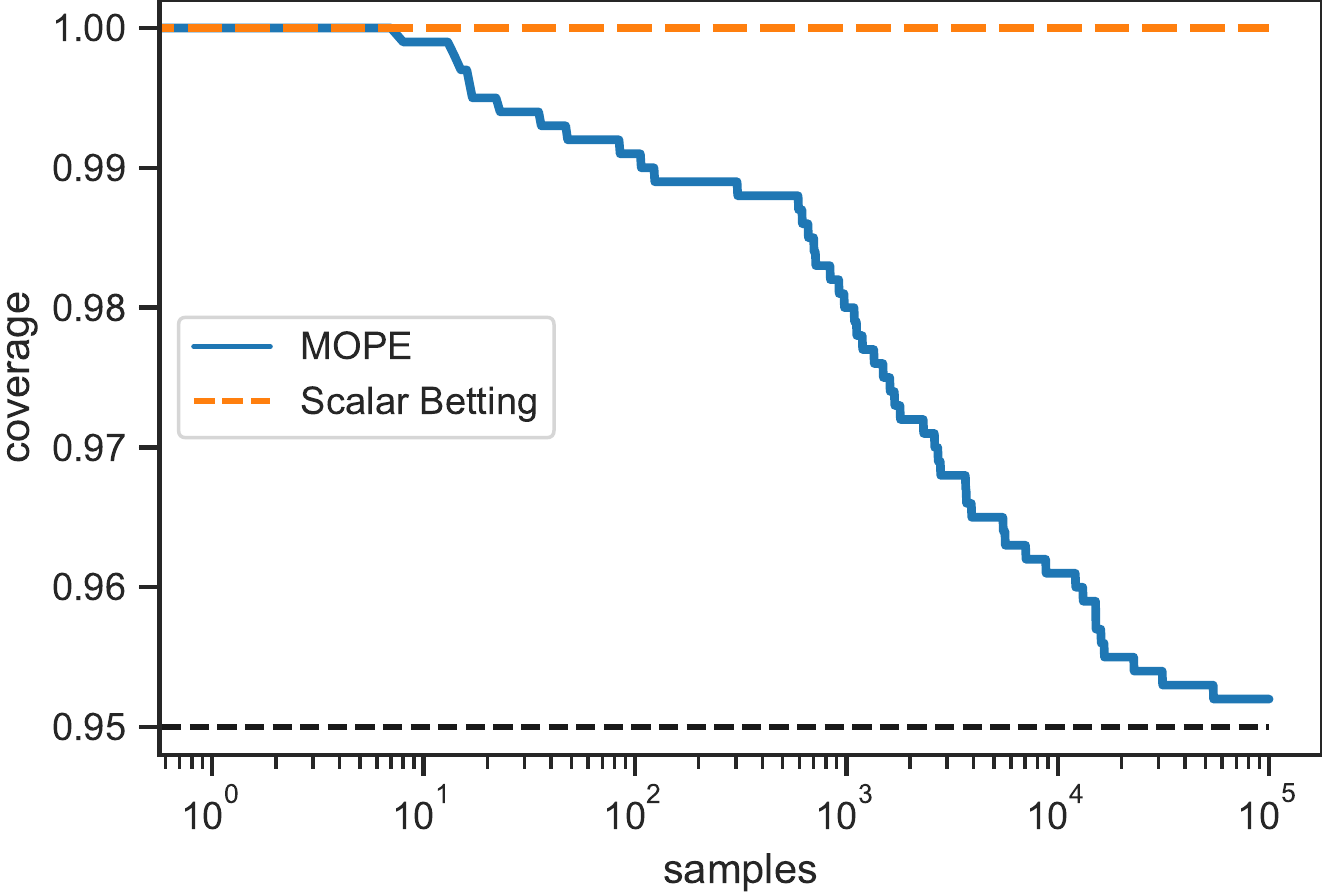}
    \caption{Empirical coverage for two proposed CSs. The CS that bets on 
    both $w-1$ and $wr-v$ converges to nominal coverage while the CS
    that does not bet on $w-1$ overcovers.}
    \label{fig:coverage}
\end{figure}

\begin{figure*}
    \centering
    \includegraphics[width=0.8\textwidth]{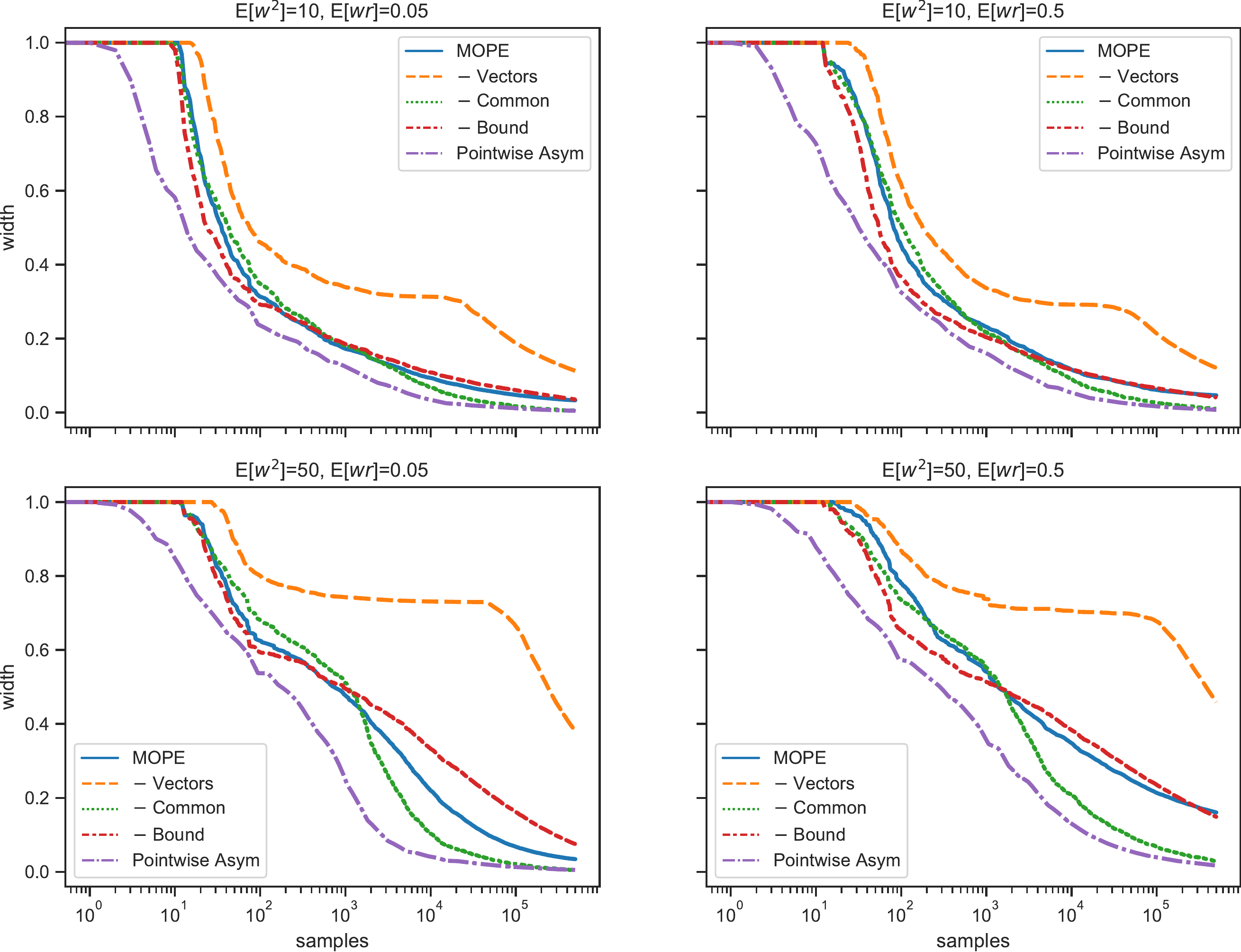}
    \caption{The width of 95\% CS produced by MOPE and its three ablations.
    The pointwise asymptotic curve is \emph{not} a CS.}
    \label{fig:width}
\end{figure*}

\begin{figure}
    \centering
    \includegraphics[width=0.75\linewidth]{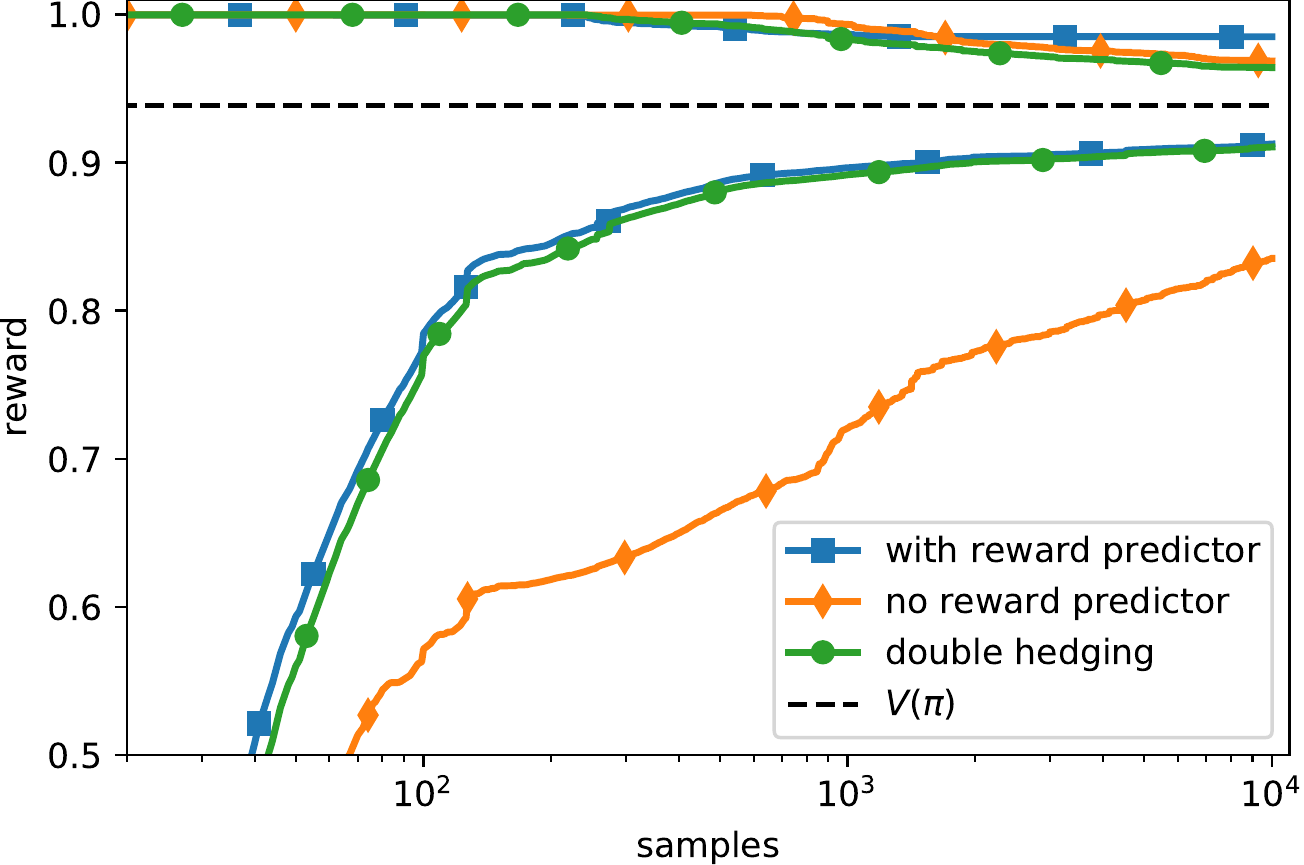}
    \caption{Three 99.9\% CSs with/without a reward predictor and a doubly hedged one that achieves the best of both}
    \label{fig:predictor}
\end{figure}

\begin{figure}
    \centering
    \includegraphics[width=0.75\linewidth]{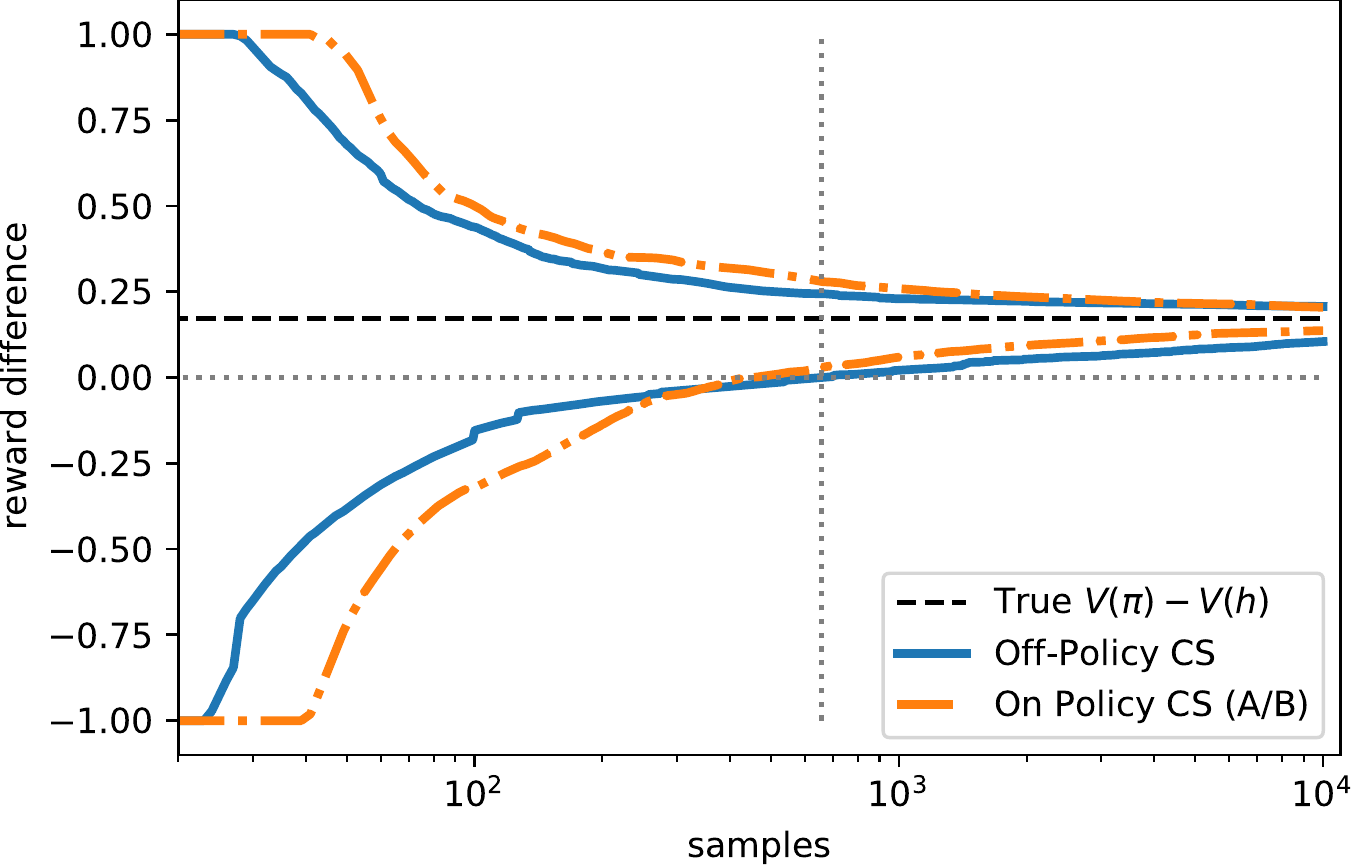}
    \caption{CS for gated deployment and A/B test. $\pi$ can be 
    deployed as soon as the lower CS crosses 0 (dotted line at $t=657$).}
    \label{fig:gd}
\end{figure}

\subsection{Coverage} \label{sec:coverage}

While any predictable betting sequence
guarantees correct coverage, some will overcover more than
others. Here we investigate the coverage properties of
MOPE and the strategy of Section~\ref{sec:scalar}.
We generate 1000 sequences of 100000 $(w,r)$ pairs each from a different
distribution. All distributions are maximum entropy distributions subject to
$(w,r) \in \{0, 0.5, 2, 100\} \times \{0,1\}$, $\E[w]=1$, $\E[w^2]=10$
and $V(\pi)$ sampled uniformly in $[0,1]$. In Figure~\ref{fig:coverage} we show
the empirical mean coverage of the two CSs for $\alpha=0.05$. 
MOPE approaches nominal coverage from above,
a property rarely seen with standard confidence bounds.

\subsection{Computational vs. Statistical Efficiency}

We run an ablation study for the three ingredients of MOPE, where
{\sf $-$Vector} is the scalar betting technique of section~\ref{sec:scalar}; {\sf $-$Common} solves \eqref{eq:quadoptv} over a grid of 200 $v$ values at each timestep;
and {\sf $-$Bound} optimizes the log wealth exactly rather than the bound of Lemma~\ref{lem:quadbound}, i.e., Algorithm~\ref{alg:main} with equation~\eqref{eq:ftl} in lieu of equation~\eqref{eq:quadopt}.

We use four synthetic environments which are distributions over $(w,r)$ generated in the same way as section~\ref{sec:coverage} but
with $(V(\pi),\E[w^2]) \in \{0.05, 0.5\} \times \{10, 50\}$.
Table~\ref{tab:timings}
shows the running times for each method in the environment with 
the largest variance. We see that directly 
maximizing wealth and individual betting per $v$ are very slow.
MOPE and {\sf $-$Vector} are computationally efficient.
In Figure~\ref{fig:width} we show the 
average CS width over 10 repetitions for 500000 time steps
for MOPE and its ablations as well as the 
asymptotic CI from~\citet{karampatziakis2019empirical}
which is only valid \emph{pointwise} and provides a lower bound for
all CSs in the figure. MOPE is better than {\sf $-$Vector} 
and as good or better than {\sf $-$Bound}.
While MOPE is not as tight as
the (much more computationally demanding) 
{\sf $-$Common}, the gap is small in all 
but the most challenging environment.

\begin{table}[t]
\caption{Timings for MOPE and its ablations on 500000 samples}
\label{tab:timings}
\centering
\begin{small}
\begin{sc}
\begin{tabular}{lcccr}
\toprule
Method & MOPE & -Vector & -Common & -Bound \\
\midrule
Time (sec)& 32     & 14.5  & 10440 & 15882 \\
\bottomrule
\end{tabular}
\end{sc}
\end{small}
\vspace{-0.05in}
\end{table}

\subsection{Effect of a Reward Predictor}
\label{sec:rp-experiment}
We now investigate the use of  
reward predictors in our CSs using the processes
$K_t^{\pm q}(v)$ and $K_t^{\pm^2}(v)$ 
of Section~\ref{sec:reward-predictor}.
We use the first 1 million samples from the mnist8m
dataset which has 10 classes and train the following
functions: $h$ using linear multinomial 
logistic regression (MLR), $\pi$ again using MLR but
now on $1000$ random Fourier 
features (RFF)~\cite{rahimi2007random} that approximate 
a Gaussian kernel machine, and finally $q$ which
uses the same RFF represetation as $\pi$ but instead
its $i$-th output is independently trained 
to predict whether the input is the $i$-th class
using 10 binary logistic regressions.
We used the rest of the data with the following 
protocol: for each input/label pair $(x_i,y_i)$, we sample 
action $a_i$ with probability $0.9h(a_i;x_i)+0.01$
(so that we can safely set $w_{\max}=100$),
we set $r_i=1$ if $a_i=y_i$, otherwise $r_i=0$, 
and record $w_i$ and $c_i$. We estimated 
$V(\pi)\approx 0.9385$ using the next million samples. In Figure~\ref{fig:predictor}
we show the CS for $V(\pi)$ averaged over 5 runs each 
with 10000 different samples using the processes 
$K_t^{\pm q}(v)$, $K_t^{\pm}(v)$ and $K_t^{\pm^2}(v)$.
We see that including 
a reward predictor dramatically improves the lower bound
and somewhat hurts the upper bound. 
The doubly hedged process on the other hand attains the 
best of both worlds.

\subsection{CSs for Gated Deployment}
Here we investigate the use of CSs for gated deployment. 
We use the same $h$ and $\pi$ and the same data as 
in Section~\ref{sec:rp-experiment} but now we are 
using the process $K_t^{gd}(v)$ 
(or rather a computationally efficient version 
of this process based on a hedged process with 
common bets and optimizing a quadratic lower bound
c.f.\ Appendix~\ref{app:gd}).
Figure~\ref{fig:gd} shows the average CS over 5 runs
each with 10000 different samples. We see that the 
CS contains the true difference (about 0.17) and quickly
decides that $\pi$ is better than $h$ at $t=657$ samples.
We also include an on-policy CS (from \cite{waudby-smith_variance-adaptive_2020}) which
\emph{can only be computed if $\pi$ is deployed} 
e.g.\ in an A/B test. While this is riskier 
when $\pi$ is inferior to $h$, the on-policy rewards 
typically have lower variance. Thus the on policy CS can 
conclude that $\pi$ is better than $h$ at the 
same $\alpha=0.01$ using 440 samples 
(220 for each policy). If the roles of 
$\pi$ and $h$ were swapped, i.e,\ $\pi$ was the 
behavior policy and $h$ was a proposed alternative,
the on policy CS would still need to collect 220
samples from $h$. In contrast, a system using the off-policy CS 
would never have to experience any regret when 
the behavior policy is superior.

\section{Conclusions}
We presented a generic way to construct confidence sequences 
for OPE in the Contextual Bandit setting. The construction leaves 
a lot of freedom in designing betting strategies and we mostly 
explored options with an eye towards computational
efficiency. Theoretically we achieve finite sample coverage 
and validity at any time with minimal assumptions. Empirically
the resulting sequences are tight and not too far away from asymptotic 
and pointwise valid existing work. Theoretical results on the 
width of our CSs remain elusive and are both an interesting 
area for future work and a key to unlock much stronger 
analyses of various algorithms in Bandits and RL.

\newpage

\bibliography{opecs}

\begin{thebibliography}{24}
\providecommand{\natexlab}[1]{#1}
\providecommand{\url}[1]{\texttt{#1}}
\expandafter\ifx\csname urlstyle\endcsname\relax
  \providecommand{\doi}[1]{doi: #1}\else
  \providecommand{\doi}{doi: \begingroup \urlstyle{rm}\Url}\fi

\bibitem[Langford and Zhang(2007)]{epochgreedy}
John Langford and Tong Zhang.
\newblock The epoch-greedy algorithm for contextual multi-armed bandits.
\newblock In \emph{Proceedings of the 20th International Conference on Neural
  Information Processing Systems}, pages 817--824. Citeseer, 2007.

\bibitem[Horvitz and Thompson(1952)]{HT52}
Daniel~G Horvitz and Donovan~J Thompson.
\newblock A generalization of sampling without replacement from a finite
  universe.
\newblock \emph{Journal of the American statistical Association}, 47\penalty0
  (260):\penalty0 663--685, 1952.

\bibitem[Kallus and Uehara(2019)]{kallus2019intrinsically}
Nathan Kallus and Masatoshi Uehara.
\newblock Intrinsically efficient, stable, and bounded off-policy evaluation
  for reinforcement learning.
\newblock \emph{arXiv preprint arXiv:1906.03735}, 2019.

\bibitem[Owen(2001)]{owen2001empirical}
{Art B} Owen.
\newblock \emph{Empirical likelihood}.
\newblock Chapman and Hall/CRC, 2001.

\bibitem[Karampatziakis et~al.(2020)Karampatziakis, Langford, and
  Mineiro]{karampatziakis2019empirical}
Nikos Karampatziakis, John Langford, and Paul Mineiro.
\newblock Empirical likelihood for contextual bandits.
\newblock \emph{Advances in neural information processing systems}, 33, 2020.

\bibitem[Dai et~al.(2020)Dai, Nachum, Chow, Li, Szepesvari, and
  Schuurmans]{dai2020coindice}
Bo~Dai, Ofir Nachum, Yinlam Chow, Lihong Li, Csaba Szepesvari, and Dale
  Schuurmans.
\newblock Coindice: Off-policy confidence interval estimation.
\newblock \emph{Advances in neural information processing systems}, 33, 2020.

\bibitem[Waudby-Smith and Ramdas(2020)]{waudby-smith_variance-adaptive_2020}
Ian Waudby-Smith and Aaditya Ramdas.
\newblock Variance-adaptive confidence sequences by betting.
\newblock \emph{arXiv:2010.09686 [math, stat]}, October 2020.
\newblock URL \url{http://arxiv.org/abs/2010.09686v1}.
\newblock arXiv: 2010.09686.

\bibitem[De~Rooij et~al.(2014)De~Rooij, Van~Erven, Gr{\"u}nwald, and
  Koolen]{de2014follow}
Steven De~Rooij, Tim Van~Erven, Peter~D Gr{\"u}nwald, and Wouter~M Koolen.
\newblock Follow the leader if you can, hedge if you must.
\newblock \emph{The Journal of Machine Learning Research}, 15\penalty0
  (1):\penalty0 1281--1316, 2014.

\bibitem[Hazan et~al.(2007)Hazan, Agarwal, and Kale]{hazan2007logarithmic}
Elad Hazan, Amit Agarwal, and Satyen Kale.
\newblock Logarithmic regret algorithms for online convex optimization.
\newblock \emph{Machine Learning}, 69\penalty0 (2-3):\penalty0 169--192, 2007.

\bibitem[Fan et~al.(2015)Fan, Grama, and Liu]{fan2015exponential}
Xiequan Fan, Ion Grama, and Quansheng Liu.
\newblock Exponential inequalities for martingales with applications.
\newblock \emph{Electronic Journal of Probability}, 20, 2015.

\bibitem[Robins and Rotnitzky(1995)]{RnR}
James~M. Robins and Andrea Rotnitzky.
\newblock Semiparametric efficiency in multivariate regression models with
  missing data.
\newblock \emph{Journal of the American Statistical Association}, 90\penalty0
  (429):\penalty0 122--129, 1995.

\bibitem[Dud{\'\i}k et~al.(2011)Dud{\'\i}k, Langford, and Li]{dudik2011doubly}
Miroslav Dud{\'\i}k, John Langford, and Lihong Li.
\newblock Doubly robust policy evaluation and learning.
\newblock \emph{arXiv preprint arXiv:1103.4601}, 2011.

\bibitem[Swaminathan and Joachims(2015)]{swaminathan2015self}
Adith Swaminathan and Thorsten Joachims.
\newblock The self-normalized estimator for counterfactual learning.
\newblock In \emph{advances in neural information processing systems}, pages
  3231--3239, 2015.

\bibitem[Wang et~al.(2017)Wang, Agarwal, and Dud{\'{\i}}k]{SWITCH}
Yu{-}Xiang Wang, Alekh Agarwal, and Miroslav Dud{\'{\i}}k.
\newblock Optimal and adaptive off-policy evaluation in contextual bandits.
\newblock In \emph{Proceedings of the 34th International Conference on Machine
  Learning, {ICML} 2017, Sydney, NSW, Australia, 6-11 August 2017}, pages
  3589--3597, 2017.
\newblock URL \url{http://proceedings.mlr.press/v70/wang17a.html}.

\bibitem[Vlassis et~al.(2019)Vlassis, Bibaut, Dimakopoulou, and
  Jebara]{vlassis2019design}
Nikos Vlassis, Aurelien Bibaut, Maria Dimakopoulou, and Tony Jebara.
\newblock On the design of estimators for bandit off-policy evaluation.
\newblock In \emph{International Conference on Machine Learning}, pages
  6468--6476, 2019.

\bibitem[Thomas et~al.(2015)Thomas, Theocharous, and
  Ghavamzadeh]{thomas2015high}
Philip Thomas, Georgios Theocharous, and Mohammad Ghavamzadeh.
\newblock High-confidence off-policy evaluation.
\newblock In \emph{Proceedings of the AAAI Conference on Artificial
  Intelligence}, volume~29, 2015.

\bibitem[Li et~al.(2015)Li, Chen, Kleban, and Gupta]{li2015counterfactual}
Lihong Li, Shunbao Chen, Jim Kleban, and Ankur Gupta.
\newblock Counterfactual estimation and optimization of click metrics in search
  engines: A case study.
\newblock In \emph{Proceedings of the 24th International Conference on World
  Wide Web}, pages 929--934. ACM, 2015.

\bibitem[Bottou et~al.(2013)Bottou, Peters, Qui{\~n}onero-Candela, Charles,
  Chickering, Portugaly, Ray, Simard, and Snelson]{bottou2013counterfactual}
L{\'e}on Bottou, Jonas Peters, Joaquin Qui{\~n}onero-Candela, Denis~X Charles,
  D~Max Chickering, Elon Portugaly, Dipankar Ray, Patrice Simard, and
  Ed~Snelson.
\newblock Counterfactual reasoning and learning systems: The example of
  computational advertising.
\newblock \emph{The Journal of Machine Learning Research}, 14\penalty0
  (1):\penalty0 3207--3260, 2013.

\bibitem[Ramdas et~al.(2020)Ramdas, Ruf, Larsson, and
  Koolen]{ramdas2020admissible}
Aaditya Ramdas, Johannes Ruf, Martin Larsson, and Wouter Koolen.
\newblock Admissible anytime-valid sequential inference must rely on
  nonnegative martingales.
\newblock \emph{arXiv preprint arXiv:2009.03167}, 2020.

\bibitem[Robbins(1970)]{robbins_statistical_1970}
Herbert Robbins.
\newblock Statistical {Methods} {Related} to the {Law} of the {Iterated}
  {Logarithm}.
\newblock \emph{The Annals of Mathematical Statistics}, 41\penalty0
  (5):\penalty0 1397--1409, October 1970.
\newblock ISSN 0003-4851, 2168-8990.

\bibitem[Howard et~al.(2020)Howard, Ramdas, McAuliffe, and
  Sekhon]{howard_uniform_2019}
Steven~R. Howard, Aaditya Ramdas, Jon McAuliffe, and Jasjeet Sekhon.
\newblock Time-uniform, nonparametric, nonasymptotic confidence sequences.
\newblock \emph{The Annals of Statistics}, forthcoming, 2020.

\bibitem[Orabona and Pál(2016)]{OrabonaP16}
Francesco Orabona and Dávid Pál.
\newblock Coin betting and parameter-free online learning.
\newblock In \emph{NIPS}, pages 577--585, 2016.

\bibitem[Rahimi and Recht(2007)]{rahimi2007random}
Ali Rahimi and Benjamin Recht.
\newblock Random features for large-scale kernel machines.
\newblock In \emph{NIPS}, volume~3, page~5. Citeseer, 2007.

\bibitem[Ville(1939)]{ville1939etude}
Jean Ville.
\newblock Etude critique de la notion de collectif.
\newblock \emph{Bull. Amer. Math. Soc}, 45\penalty0 (11):\penalty0 824, 1939.

\end{thebibliography}
\bibliographystyle{unsrtnat}

\onecolumn
\appendix

\section{Proofs}

\subsection{Main Lemma}
The following lemma will be helpful in the proofs of all our Theorems.
\begin{lemma}
\label{lem:main}
Suppose we have a family of stochastic processes 
$(M_t(m))_{t=0}^{\infty}$ indexed by $m \in [0,1]$ and further assume
the process $(M_t(\mu))_{t=0}^{\infty}$ for some $\mu \in [0,1]$ is a 
non-negative martingale with respect to a filtration $\mathcal{F}_t$ 
(i.e.\ $\E[M_t|\mathcal{F}_{t-1}]=M_{t-1}$ for $t \geq 1$)
with initial value $M_0=1$. Then for any given
$\alpha \in [0,1]$ the sequence of sets
$C_t = \left\{m: M_t(m) \leq \frac{1}{\alpha}\right\}$ is a 
$(1-\alpha)$ confidence sequence for $\mu$ and so is
its running intersection $\bigcap_{i=1}^t C_i$.
\end{lemma}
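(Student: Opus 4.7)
The plan is to apply Ville's inequality to the single process $(M_t(\mu))_{t=0}^{\infty}$, which by hypothesis is a nonnegative martingale with initial value $1$. Ville's inequality states that for any such process and any $a>0$, $\Pr(\sup_{t \geq 0} M_t(\mu) \geq a) \leq \E[M_0(\mu)]/a$; taking $a = 1/\alpha$ yields $\Pr(\sup_{t \geq 0} M_t(\mu) \geq 1/\alpha) \leq \alpha$. The only real ``content'' of the lemma is the translation between this event and the failure event of the confidence sequence, so the proof reduces to carefully matching the two.

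Next I would unfold the definitions. By construction, $\mu \in C_t$ if and only if $M_t(\mu) \leq 1/\alpha$, so the failure event for $(C_t)$ is
\[
\{\exists\, t \in \mathbb{N} : \mu \notin C_t\} \;=\; \{\exists\, t \in \mathbb{N} : M_t(\mu) > 1/\alpha\} \;\subseteq\; \{\sup_{t \geq 0} M_t(\mu) \geq 1/\alpha\},
\]
whose probability is at most $\alpha$ by Ville. For the running intersection, note that $\mu \notin \bigcap_{i=1}^{t} C_i$ for some $t$ if and only if $\mu \notin C_i$ for some $i$, which is the same event as above; hence the time-uniform failure probability of $(\bigcap_{i=1}^{t} C_i)_t$ is identical, and in particular also at most $\alpha$. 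Crucially, both bounds hold for every data-generating distribution $(D,R)$, because the martingale and nonnegativity hypotheses were assumed under the law induced by $(D,R)$, and Ville's inequality is distribution-free given these properties.

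The main (and essentially only) obstacle is correctly identifying the ``true'' parameter value $\mu$ with the unique index at which the family forms a martingale, and making sure the strict-vs-nonstrict inequality does not cause trouble; both are handled by the inclusion displayed above. No integrability or regularity conditions beyond nonnegativity and the martingale property are needed, which is exactly why this lemma will later feed cleanly into Theorems~\ref{thm:cs}, \ref{thm:hedged}, \ref{thm:reward-predictor}, and \ref{thm:gated}: in each case one only has to verify that the corresponding wealth process (or average of wealth processes, using the fact that a convex combination of nonnegative martingales with common initial value is again a nonnegative martingale with the same initial value) is a nonnegative martingale when evaluated at the true target, and then invoke this lemma as a black box.
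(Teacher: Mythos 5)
Your proof is correct and follows essentially the same route as the paper's: apply Ville's inequality to the nonnegative martingale $(M_t(\mu))_{t\ge 0}$ with initial value $1$, identify the failure event $\{\exists t: \mu\notin C_t\}$ with the martingale exceeding $1/\alpha$, and observe that the running intersection has the identical failure event. Your added care about the strict-versus-nonstrict inequality and the uniformity over $(D,R)$ is a slight refinement of, but not a departure from, the paper's argument.
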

\begin{proof}
For the first part, by the definition of a CS it suffices to show that
$\Pr\left(\exists t \in \mathbb{N}: \mu \notin C_t\right) \leq \alpha$ or
\[
\Pr\left(\exists t \in \mathbb{N}: \mu \notin \left\{m: M_t(m) \leq \frac{1}{\alpha}\right\}\right) \leq \alpha.
\]
An error occurs only if $M_t(\mu)$ exceeds $1/\alpha$ at any point. This means that it suffices to show that
\[
\Pr\left(\exists t \in \mathbb{N}: M_t(\mu) \geq \frac{1}{\alpha}\right) \leq \alpha ,
\]
which is true by Ville's inequality~\cite{ville1939etude} since $M_t(\mu)$ is a 
non-negative martingale with initial value 1.

For the second part, we need to show that
\[
\Pr\left(\exists t \in \mathbb{N}: \mu \notin \bigcap_{s=1}^t\left\{m: M_s(m) \leq \frac{1}{\alpha}\right\}\right) \leq \alpha.
\]
This reduces to showing
\[
\Pr\left(\exists t \in \mathbb{N}: \exists s\in \{1,\ldots,t\}: M_s(\mu) \geq \frac{1}{\alpha}\right) \leq \alpha,
\]
which further simplifies to
\[
\Pr\left(\exists t \in \mathbb{N}: M_t(\mu) \geq \frac{1}{\alpha}\right) \leq \alpha,
\]
and this is again implied by Ville's inequality.
\end{proof}

\subsection{Proof of Theorem~\ref{thm:martingale}}
\begin{proof}
Consider the filtration $(\mathcal{F}_t)_{t=0}^{\infty}$ generated
by the sequence of sigma-fields 
$\mathcal{F}_0 \subset \mathcal{F}_1 \subset \ldots$ 
with $\mathcal{F}_0$ the trivial sigma-field and 
$\mathcal{F}_t = \sigma((w_0,r_0),(w_1,r_1),\ldots,(w_t,r_t))$.
It suffices to show that our betting ensures that $K_t(V(\pi))$
is a non-negative martingale with initial value 1
as we can then apply lemma~\ref{lem:main}. $K_0(v)=1$ is by 
the definition of the process (we start with a wealth of 1),
and $K_t(v)\geq 0$ for all $v \in [0,1]$ because our bets 
are in the set $\mathcal{D}_v^0$ (c.f.\ eq~\eqref{eq:batch-domain}). 
Thus it remains to show
$\E\left[K_t\left(V(\pi)\right)|\mathcal{F}_{t-1}\right]=K_{t-1}(V(\pi))$.
We have the following chain of equalities
\begin{align*}
    \E\left[K_t\left(V(\pi)\right)|\mathcal{F}_{t-1}\right]&=
    \E\left[K_{t-1}\left(1+\lambda_{1,t}(w_t-1)+\lambda_{2,t}(w_t r_t -V(\pi))\right)|\mathcal{F}_{t-1}\right]\\
    &=K_{t-1} \E\left[1+\lambda_{1,t}(w_t-1)+\lambda_{2,t}(w_t r_t -V(\pi))|\mathcal{F}_{t-1}\right]\\
    &=K_{t-1} \left(1+\E\left[\lambda_{1,t}(w_t-1)|\mathcal{F}_{t-1}\right]+
    \E\left[\lambda_{2,t}(w_t r_t -V(\pi))|\mathcal{F}_{t-1}\right]\right)\\
    &=K_{t-1} \left(1+\lambda_{1,t}\E\left[(w_t-1)|\mathcal{F}_{t-1}\right]+
    \lambda_{2,t}\E\left[(w_t r_t -V(\pi))|\mathcal{F}_{t-1}\right]\right)\\
    &=K_{t-1} \left(1+\lambda_{1,t}\cdot 0 + \lambda_{2,t} \cdot 0\right) = K_{t-1}
\end{align*}
where we have used that $K_{t-1}$, $\lambda_{1,t}$, $\lambda_{2,t}$
are measurable with respect to $\mathcal{F}_{t-1}$ and that $\E[w]=1$ and 
$\E[wr]=V(\pi)$.
\end{proof}

\subsection{Proof of Lemma~\ref{lem:quadbound}}
\begin{proof}
Consider the function $f(x) = \ln(1+x)- x - \psi x^2$ with domain $\left[-\frac{1}{2},\infty\right)$. Note that $f\left(-\frac{1}{2}\right)=0$
and $\lim_{x\to\infty} f(x) = \infty$.
Furthermore $f$ has two critical points: $0$ and $-\frac{2\psi + 1}{2\psi}$. 
But $f(0)=0$ and $f\left(-\frac{2\psi + 1}{2\psi}\right)>0$ so we conclude 
that $f(x)\geq 0$ for all $x\geq -\frac{1}{2}$.
\end{proof}

\subsection{Proof of Theorem~\ref{thm:hedged}}
\begin{proof}
We will first show that $K_t^{\pm}(V(\pi))$ is a non-negative martingale with initial
value 1.
Consider the same filtration as for Theorem~\ref{thm:martingale}.
Note that $K_0^{\pm}(v)=1$ is by 
the definition of the process (we start with a wealth of 1).
We analyze $K_t^{+}(v)$ and $K_t^{-}(v)$ separately. 
Note that $K_t^{+}(v)\geq 0$ for all $v \in [0,1]$ because our bets 
are in the set $\mathcal{C} \subset \mathcal{D}_v^0$ (c.f.\ eq~\eqref{eq:batch-domain}).
For $K_t^{-}(v)$ we note that the process is isomorphic to a process
similar to $K_t^{+}(v)$ but with the reward and $v$ redefined. Thus
our bets keep $K_t^{-}(v)\geq 0$. 
We now show the equality
\begin{align*}
 \E\left[K_t^{-}\left(V(\pi)\right)|\mathcal{F}_{t-1}\right]&=K_{t-1}^{-}(V(\pi)),
\end{align*}
as the equality $\E\left[K_t^{+}\left(V(\pi)\right)|\mathcal{F}_{t-1}\right]=K_{t-1}^{+}(V(\pi))$ is exactly what was shown in Theorem~\ref{thm:martingale}.  
We have
\begin{align*}
    \E\left[K_t^{-}\left(V(\pi)\right)|\mathcal{F}_{t-1}\right]&=
    \E\left[K_{t-1}^{-}\left(1+\lambda_{1,t}^{-}(w_t-1)+\lambda_{2,t}^{-}(w_t (1-r_t) (1-V(\pi)))\right)|\mathcal{F}_{t-1}\right]\\
    &=K_{t-1}^{-} \E\left[1+\lambda_{1,t}^{-}(w_t-1)+\lambda_{2,t}^{-}(w_t - 1 + V(\pi) - w_t r_t)|\mathcal{F}_{t-1}\right]\\
    &=K_{t-1}^{-} \left(1+(\lambda_{1,t}^{-}+\lambda_{2,t}^{-})\E\left[(w_t-1)|\mathcal{F}_{t-1}\right]+
    \lambda_{2,t}^{-}\E\left[(V(\pi)-w_t r_t)|\mathcal{F}_{t-1}\right]\right)\\
    &=K_{t-1}^{-} \left(1+(\lambda_{1,t}^{-}+\lambda_{2,t}^{-})\cdot 0 + \lambda_{2,t}^{-} \cdot 0\right) = K_{t-1}^{-}.
\end{align*}
Therefore $\frac{1}{2}\left(K_t^{+}\left(V(\pi)\right)+K_t^{-}\left(V(\pi)\right)\right)$ is also a non-negative martingale with initial value 1. Applying 
Lemma~\ref{lem:main} finishes the proof of the theorem.
\end{proof}

\subsection{Proof of Theorem~\ref{thm:reward-predictor}}
\begin{proof}
We note that the proof below works for a sequence of 
predictable functions $q_t(x,a)$ but to reduce notation
we use $q(x,a)$.
Consider the filtration $(\mathcal{F}_t)_{t=0}^{\infty}$ generated
by the sequence of sigma-fields 
$\mathcal{F}_0 \subset \mathcal{F}_1 \subset \ldots$ 
with $\mathcal{F}_0$ the trivial sigma-field and 
$\mathcal{F}_t = \sigma((x_1,a_1,r_1),\ldots,(x_t,a_t,r_t))$.
Note that $K_0^{q}(v)=1$ and $K_t^{q}(v)\geq 0$ for all $v \in [0,1]$ 
because our bets are in the set $\mathcal{C}^q$.
Thus it remains to show
$\E\left[K_t^q\left(V(\pi)\right)|\mathcal{F}_{t-1}\right]=K_{t-1}(V(\pi))$.
We have the following chain of equalities
\begin{align*}
    \E\left[K_t^q\left(V(\pi)\right)|\mathcal{F}_{t-1}\right]&=
    \E\left[K_{t-1}^q\left(1+\lambda_{1,t}(w_t-1)+\lambda_{2,t}(w_t r_t - c_t -V(\pi))\right)|\mathcal{F}_{t-1}\right]\\
    &=K_{t-1}^q (1+\lambda_{1,t}\E\left[w_t-1|\mathcal{F}_{t-1}\right]
    +\lambda_{2,t}\E\left[w_t r_t -V(\pi)|\mathcal{F}_{t-1}\right]
    -\lambda_{2,t}\E\left[c_t|\mathcal{F}_{t-1}\right])\\
    &=K_{t-1}^q \left(\left.1-\lambda_{2,t}\E\left[w_t q(x_t,a_t) - \sum_{a'} \pi(a';x_t)q(x_t,a')\right|\mathcal{F}_{t-1}\right]\right) = K_{t-1}^q,
\end{align*}
where we have used that $K_{t-1}$, $\lambda_{1,t}$, $\lambda_{2,t}$
are measurable with respect to $\mathcal{F}_{t-1}$ and that $\E[w]=1$ and 
$\E[wr]=V(\pi)$ as well as $\E_{x_t\sim D,a_t\sim h}\left[w_t q(x_t,a_t)\right] = \E_{x_t}\left[\sum_{a'}\pi(a';x_t)q(x_t,a')\right]$.
Thus the claim for $C_t^q$ and its running intersection can be shown
by applying lemma~\ref{lem:main}. The claim for $C_t^{\pm q}$ is 
completely analogous using the ideas here and in the proof of 
Theorem~\ref{thm:hedged}.
\end{proof}

\subsection{Proof of Theorem~\ref{thm:gated}}
\begin{proof}

Consider the same filtration as for Theorem~\ref{thm:martingale}.
Note that $K_0^{gd}(v)=1$ is by 
the definition of the process and that $K_t^{gd}(v)\geq 0$ for all 
$v \in [0,1]$ because our bets 
are in the set $\mathcal{G}_v^0$ (c.f.\ eq~\eqref{eq:gd-domain}).
Finally, we have
\begin{small}
\begin{align*}
    \E\left[K_t^{gd}\left(V(\pi)-V(h)\right)|\mathcal{F}_{t-1}\right]
    &= \E\left[K_{t-1}^{gd}\bigg(1+\lambda_{1,t}(w_t-1)+\lambda_{2,t}
    \big(w_t r_t - r_t - \left(V(\pi)-V(h)\right)\big)\bigg)|\mathcal{F}_{t-1}\right]\\
    &= K_{t-1}^{gd}\left(1+\lambda_{1,t} \E\left[w_t-1|\mathcal{F}_{t-1}\right]
    +\lambda_{2,t}\E\left[w_t r_t - V(\pi)|\mathcal{F}_{t-1}\right]
    -\lambda_{2,t}\E\left[r_t - V(h)|\mathcal{F}_{t-1}\right]\right)\\
    &= K_{t-1}^{gd}\left(1+\lambda_{1,t} \cdot 0 + \lambda_{2,t} \cdot 0 - \lambda_{2,t} \cdot 0\right) = K_{t-1}^{gd}.
\end{align*}
\end{small}
Therefore $K_t^{gd}(V(\pi)-V(h))$ is a non-negative martingale with initial
value 1. Applying 
lemma~\ref{lem:main} finishes the proof of the theorem.
\end{proof}

\section{Avoiding grid search}
\label{app:nogrid2d}
We first lower bound each process separately, then lower bound
the hedged process. We denote the bets for $K^{+}$ (respectively
$K^{-}$) as $\lambda^{+}$, (resp. $\lambda^{-}$).
From lemma~\ref{lem:quadbound} we have
\[
\ln(K_t^{+}(v)) \geq \sum_{i=1}^{t-1} {\lambda_i^{+}}^\top b_i(v) + \psi \sum_i {\lambda_i^{+}}^\top A_i(v) {\lambda_i^{+}},
\]
and
\[
\ln(K_t^{-}(v)) \geq \sum_{i=1}^{t-1} {\lambda_i^{-}}^\top b_i'(v') + \psi \sum_i {\lambda_i^{-}}^\top A_i'(v') {\lambda_i^{-}},
\]
where $v'=1-v$, 
$b_i'(v)=
\cvec{w_i-1}{w_i (1-r_i) -v}
$
and $A_i'(v)=b_i'(v)b_i'(v)^\top$.
For the Hedged process, using that for any $a,b$
\[
\ln\left(\exp(a)+\exp(b)\right)\geq \max(a,b)
\]
to first establish
\[
\ln(K^{\pm}(v)) \geq \max(\ln(K^{+}(v))-\ln(2),\ln(K^{-}(v))-\ln(2))
\]
and further bound each term in the maximum by the respective 
quadratic lower bound. We conclude that
if a $v$ achieves 
\[
 \sum_{i=1}^{t-1} {\lambda_i^{+}}^\top b_i(v) + \psi \sum_i {\lambda_i^{+}}^\top A_i(v) \lambda_i^{+} = \ln\left(\frac{2}{\alpha}\right),
\]
or a $v'=1-v$ achieves 
\[
\sum_{i=1}^{t-1} {\lambda_i^{-}}^\top b_i'(v') + \psi \sum_i {\lambda_i^{-}}^\top A_i'(v') \lambda_i^{-} = \ln\left(\frac{2}{\alpha}\right),
\]
then we also achieve $K_t^{\pm}(v) \geq \frac{1}{\alpha}$.
In terms of $v$ and $v'$ these expressions are second degree
equations and thus their real roots in $[0,1]$ (if any) provide 
a safe bracketing of the confidence region $\{v:K_t^{\pm}(v)\leq 1/\alpha\}$. For $K_t^{+}$ let
\begin{align}
C_t&= \sum_{i=1}^{t-1} {\lambda_i^{+}}^\top \cvec{w_i-1}{w_i r_i} \label{eq:upsuffc} ,\\
S_t&= \sum_{i=1}^{t-1} {\lambda_i^{+}}^\top \cvec{0}{1} ,\\
Q_t&= \sum_{i=1}^{t-1} \psi  {\lambda_i^{+}}^\top \symmat{(w_i-1)^2}{(w_i-1)w_i r_i}{w_i^2r_i^2} \lambda_i^{+} ,\\
T_t&= \sum_{i=1}^{t-1} \psi  {\lambda_i^{+}}^\top \symmat{0}{-(w_i-1)}{-2w_ir_i} \lambda_i^{+} ,\\
U_t&=  \sum_{i=1}^{t-1} \psi {\lambda_i^{+}}^\top \symmat{0}{0}{1} \lambda_i^{+}, \label{eq:upsuffu}
\end{align}
and define $C_t',S_t',Q_t',T_t',U_t'$ similarly by using $\lambda_i^{-}$ 
instead of $\lambda_i^{+}$ and $1-r_i$ instead of $r_i$. Then
the largest real root $v^{+}$ of
\[
C_t - S_t v + Q_t + T_t v + U_t v^2 - \ln\left(\frac{2}{\alpha}\right) = 0,
\]
if it exists, satisfies $K_t^{\pm}(v^{+})\geq \frac{1}{\alpha}$. Similarly
we can obtain $v'$ as the largest real root of the quadratic
with $C_t',S_t',Q_t',T_t',U_t'$ in place of $C_t,S_t,Q_t,T_t,U_t$,
if it exists. Then $v^{-}=1-v'$ satisfies $K_t^{\pm}(v^{-})\geq \frac{1}{\alpha}$.

\section{Details of the Scalar Betting Strategy}
\subsection{Elimination of one bet} \label{app:betaopt}
Since in the long term $\lambda_1$ should be 0 
its purpose can only be as a hedge in the short-term. 
We formulate this by considering the
worst case wealth reduction among three outcomes :
$(w,r)=(w_{\max},1)$, 
$(w,r)=(w_{\max},0)$ and $w=0$ with any reward. 
We choose $\lambda_1$ to maximize the wealth 
in the worst of these outcomes. Thus we set 
up a family of Linear Programs (LPs) 
parametrized by $\lambda_2$ and $v$ and with optimization variables $\alpha$ and $\lambda_1$:
\begin{equation*}
\begin{array}{ll@{}ll}
\text{maximize}  & \alpha &\\
\text{subject to}& \alpha \leq 1+\lambda_1(w_{\max}-1)+\lambda_2(w_{\max}-v)  & &(z_1) \\
                 & \alpha \leq 1+\lambda_1(w_{\max}-1)-\lambda_2 v            & &(z_2) \\
                 & \alpha \leq 1-\lambda_1-\lambda_2 v                        & &(z_3), \\
\end{array}
\end{equation*}
where the variable $z_i$ in parentheses next to each constraint is the corresponding dual variable. 
\begin{theorem}
For any $v\in [0,1]$ and any $\lambda_2 \in \R$, the optimal value of $\lambda_1$ in the above LP is $\lambda_1^*=\max(-\lambda_2,0)$.
\end{theorem}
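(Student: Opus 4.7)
The plan is to eliminate $\alpha$ and reduce the LP to the two-variable max-min problem
\[
\max_{\lambda_1 \in \R} \min\left(R_1(\lambda_1), R_2(\lambda_1), R_3(\lambda_1)\right),
\]
where $R_1, R_2, R_3$ are the right-hand sides of the three constraints, and then exploit the simple pairwise comparisons among the $R_i$ to reduce to an elementary one-variable max-min.

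First I would note $R_1 - R_2 = \lambda_2 w_{\max}$, so if $\lambda_2 \geq 0$ then $R_2 \leq R_1$ and constraint $(z_1)$ is redundant, while if $\lambda_2 \leq 0$ then $R_1 \leq R_2$ and constraint $(z_2)$ is redundant. This reduces the problem to maximizing the minimum of two affine functions of $\lambda_1$ in each case. In both cases one of those functions is strictly increasing in $\lambda_1$ (slope $w_{\max} - 1 > 0$) and the other is strictly decreasing (slope $-1$), so the max-min is uniquely attained at the intersection point where the two active constraints are equal.

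For the case $\lambda_2 \geq 0$, the active functions $R_2 = 1 + \lambda_1(w_{\max}-1) - \lambda_2 v$ and $R_3 = 1 - \lambda_1 - \lambda_2 v$ meet exactly at $\lambda_1 = 0$, giving $\lambda_1^\star = 0 = \max(-\lambda_2, 0)$. For the case $\lambda_2 < 0$, setting $R_1 = R_3$ yields $\lambda_1 w_{\max} = -\lambda_2 w_{\max}$, hence $\lambda_1^\star = -\lambda_2 > 0 = \max(-\lambda_2, 0)$. Combining the two cases gives the claimed formula $\lambda_1^\star = \max(-\lambda_2, 0)$.

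There is no real obstacle beyond bookkeeping; the only thing to be careful about is verifying that at the candidate optimum the two constraints identified as active are indeed the binding ones (the third constraint is nonbinding because $R_1 - R_2$ has the appropriate sign by assumption), and that the monotonicity of the two active constraints in opposite directions forces the max-min to sit exactly at their intersection rather than at $\pm \infty$. One could alternatively derive this from LP duality using the hinted dual variables $z_1, z_2, z_3$, but the direct case analysis above is more transparent and of the same length.
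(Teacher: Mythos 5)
Your argument is correct and reaches the right answer, but it takes a genuinely different route from the paper. You eliminate $\alpha$ and solve the primal max--min directly: since $R_1-R_2=\lambda_2 w_{\max}$ is independent of $\lambda_1$, one of the first two constraints is globally redundant depending on $\mathrm{sign}(\lambda_2)$, and the remaining pair consists of one affine function increasing in $\lambda_1$ (slope $w_{\max}-1$) and one decreasing (slope $-1$), so the maximum of their minimum sits at their unique crossing. The paper instead exhibits, for each sign of $\lambda_2$, an explicit dual feasible point and a matching primal feasible point with equal objective values, and invokes weak duality to certify optimality. Your approach is more elementary and makes the geometry transparent (it also shows \emph{why} the optimizer is unique); the paper's approach produces explicit optimality certificates without having to argue about which constraints are redundant or where a piecewise-linear concave function peaks. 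Two minor points to tidy up: (i) in the case $\lambda_2<0$ you wrote $\lambda_1^{\star}=-\lambda_2>0=\max(-\lambda_2,0)$, where the last equality should read $-\lambda_2=\max(-\lambda_2,0)$; and (ii) your monotonicity step silently assumes $w_{\max}>1$ (if $w_{\max}=1$ the ``increasing'' constraint is flat and the optimizer is no longer unique) --- the paper makes the same implicit assumption, so this is not a gap relative to the intended setting, but it is worth stating.
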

\begin{proof}
The dual program is
\begin{equation*}
\begin{array}{ll@{}ll}
\text{minimize}  & (1+\lambda_2(w_{\max}-v))z_1 +(1-\lambda_2 v) z_2 + (1-\lambda_2 v) z_3 &\\
\text{subject to}& z_i \geq 0  & i=1,2,3 \\
                 & -(w_{\max}-1)(z_1+z_2)+z_3 = 0 \\
                 & z_1+z_2+z_3=1. \\
\end{array}
\end{equation*}
Consider the following two dual feasible settings:
\[
z_1=0,z_2=\frac{1}{w_{\max}}, z_3=\frac{w_{\max}-1}{w_{\max}},
\]
and 
\[
z_1=\frac{1}{w_{\max}}, z_2=0, z_3=\frac{w_{\max}-1}{w_{\max}},
\]
with corresponding dual objectives: $1-\lambda_2 v$ and $1-\lambda_2 v + \lambda_2$. From here we see that if $\lambda_2 > 0$ the former attains a
better dual objective and is thus a better bound
for the primal objective. When $\lambda_2<0$ the latter
is better. 

When $\lambda_2>0$, a primal feasible setting is
$\alpha=1-\lambda_2 v,\lambda_1=0$. Furthermore this setting
achieves the same objective as the first dual feasible setting so 
we conclude that these are the optimal primal and dual solutions when $\lambda_2>0$.

When $\lambda_2<0$, a primal feasible setting is 
$\alpha=1-\lambda_2 v +\lambda_2, \lambda_1=-\lambda_2$. Furthermore this setting
achieves the same objective as the second dual feasible setting
so we conclude that these are the optimal primal and dual solutions when $\lambda_2<0$. 

Finally when $\lambda_2=0$ the two cases give the same value for $\lambda_1$ so we conclude $\lambda_1=\max(-\lambda_2,0)$ for all $\lambda_2 \in \R$ (and $v\geq 0$).
\end{proof}

The theorem suggests that in a hedged strategy the wealth process
eliminating low values of $V(\pi)$ should set 
$\lambda_1^{>} = 0$ because $\E[wr-v] >0$ and thus $\lambda_2^{>} >0$.
The wealth process that eliminates high values of $V(\pi)$
on the other hand should have 
$\lambda_1 = -\lambda_2$ because $\E[wr-v] <0$ and thus $\lambda_2<0$.
Thus the two processes look like
\begin{align*}
K_t^{>}(v)&=\prod_{i=1} \left(1+\lambda_{2,i}^{>} (w_i r_i -v)\right),\\
K_t^{<}(v)&=\prod_{i=1} \left(1-\lambda_{2,i}^{<} (w_i - 1) + \lambda_{2,i}^{<} \left(w_i r_i -v\right)\right)=\prod_{i=1} \left(1-\lambda_{2,i}^{<} \left(w_i (1-r_i) -  (1-v)\right)\right).
\end{align*}
In the main text we have redefined $\lambda_{2,i}^{<}:=-\lambda_{2,i}^{<}$
for symmetry.

\subsection{A Technical Lemma}
\label{app:fan}
The following result can be extracted from the proof of 
Proposition~4.1 in \cite{fan2015exponential}.
\begin{lemma}
For $\xi\geq -1$ and $\lambda \in [0,1)$ we have
\begin{equation}
\ln(1+\lambda \xi) \geq \lambda \xi+\left(\ln\left(1-\lambda\right)+\lambda\right)\cdot \xi^{2}.
\label{eq:fanbound}
\end{equation}
\end{lemma}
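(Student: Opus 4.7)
The plan is to prove the inequality by fixing $\xi \geq -1$ and viewing the right hand side minus the left hand side as a function of $\lambda$ on $[0,1)$. Specifically, I would define
\[
g(\lambda) = \ln(1+\lambda\xi) - \lambda\xi - \bigl(\ln(1-\lambda)+\lambda\bigr)\xi^2
\]
and aim to show $g(\lambda) \geq 0$ for all $\lambda \in [0,1)$. The boundary value $g(0) = 0$ is immediate from $\ln 1 = 0$, so it suffices to show $g$ is non-decreasing on $[0,1)$.

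Next I would differentiate in $\lambda$, treating $\xi$ as a constant. The derivative is
\[
g'(\lambda) = \frac{\xi}{1+\lambda\xi} - \xi - \Bigl(-\frac{1}{1-\lambda} + 1\Bigr)\xi^2 = \frac{-\lambda\xi^2}{1+\lambda\xi} + \frac{\lambda\xi^2}{1-\lambda},
\]
and after putting the two fractions over a common denominator the numerator simplifies to $\lambda^2\xi^2(1+\xi)$, yielding
\[
g'(\lambda) = \frac{\lambda^2\xi^2(1+\xi)}{(1-\lambda)(1+\lambda\xi)}.
\]
This is the key algebraic step; once it is in this factored form the sign is transparent.

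To finish, I would verify that every factor is non-negative on the stated domain: $\lambda^2\xi^2 \geq 0$ trivially; $1+\xi \geq 0$ by the hypothesis $\xi \geq -1$; $1-\lambda > 0$ by the hypothesis $\lambda \in [0,1)$; and $1+\lambda\xi > 0$ because $\lambda\xi \geq -\lambda > -1$ in this regime (so the logarithm $\ln(1+\lambda\xi)$ is also well defined, as needed for $g$ to make sense). Thus $g'(\lambda) \geq 0$ on $[0,1)$, which combined with $g(0) = 0$ gives $g(\lambda) \geq 0$ throughout, i.e.\ the claimed inequality.

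The main obstacle I anticipate is the simplification in the derivative: doing it in the wrong order (for instance by differentiating in $\xi$ rather than $\lambda$) leads to a sign analysis that splits into the cases $\xi \geq 0$ and $-1 \leq \xi \leq 0$, which is messier. Differentiating in $\lambda$ and noticing that a factor of $(1+\xi)$ pops out is what makes the argument clean and avoids a case split.
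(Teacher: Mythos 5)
Your proof is correct: the derivative computation checks out, since
\[
g'(\lambda) = \frac{-\lambda\xi^2}{1+\lambda\xi} + \frac{\lambda\xi^2}{1-\lambda}
= \lambda\xi^2\cdot\frac{(1+\lambda\xi)-(1-\lambda)}{(1-\lambda)(1+\lambda\xi)}
= \frac{\lambda^2\xi^2(1+\xi)}{(1-\lambda)(1+\lambda\xi)} \geq 0,
\]
and combined with $g(0)=0$ this gives the claim. However, your route is genuinely different from the paper's. The paper fixes the auxiliary function $f(x) = \frac{\ln(1+x)-x}{x^2}$ (extended continuously at $0$), invokes the fact that $f$ is increasing on $(-1,\infty)$, and applies $f(\lambda\xi)\geq f(-\lambda)$ --- valid because $\lambda\xi \geq -\lambda$ when $\xi\geq -1$ and $\lambda\geq 0$ --- then multiplies through by $\lambda^2\xi^2\geq 0$ to rearrange into the stated bound. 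That argument is a one-liner but outsources the monotonicity of $f$ to a cited result of Fan et al.; your argument is longer on the page but fully self-contained, requiring only one differentiation and the factorization of the numerator. The two are closely related in spirit (both ultimately exploit that $\lambda\xi$ and $-\lambda$ sit on opposite sides and that the relevant curvature correction is controlled at the worst point $-\lambda$), and your observation that differentiating in $\lambda$ rather than $\xi$ avoids a case split on the sign of $\xi$ is exactly the right tactical choice. One cosmetic note: you describe $g$ in words as ``right hand side minus left hand side'' but then define it as left minus right; the definition and the goal $g\geq 0$ are the consistent and correct pair, so only the prose needs fixing.
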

\begin{proof}
Note that $\lambda \xi \geq -\lambda > -1$. 
For $x>-1$ the function $f(x) = \frac{\ln(1 + x)-x}{x^2}$
is increasing in $x$, therefore
$f(\lambda \xi) \geq f(-\lambda)$. Rearranging
leads to the statement of the lemma.
\end{proof}
We will be using this lemma with bets 
$\lambda \in [0,1)$ and $\xi_i=w_ir_i-v$ or $\xi_i=w_i(1-r_i)-(1-v_i)$.
In either case $\xi_i\geq -1$.
This lemma provides a stronger lower bound than 
that of Lemma~\ref{lem:quadbound}. The reason
we use the latter for vector bets is that the
natural extension of \eqref{eq:fanbound} to 
the vector case does not lead to a convex 
problem. 

\subsection{Avoiding grid Search}
\label{app:nogrid1d}
Suppose that our bets $\lambda_{2,i}^{+}$ and $\lambda_{2,i}^{-}$ 
do not depend on $v$.
We have the individual lower bounds
\[
\ln(K^{+}(v)) \geq \sum_i \lambda_{2,i}^{+} (w_i r_i -v) + \sum_i (\ln(1-\lambda_{2,i}^{+})+\lambda_{2,i})(w_i r_i -v)^2
\]
and
\[
\ln(K^{-}(v)) \geq \sum_i \lambda_{2,i}^{-} (w_i r_i' -v') + \sum_i (\ln(1-\lambda_{2,i}^{-})+\lambda_{2,i}^{-})(w_i r_i' -v')^2,
\]
where $r'=1-r$, $v'=1-v$.
For the Hedged process, using that for any $a,b$
\[
\ln\left(\exp(a)+\exp(b)\right)\geq \max(a,b)
\]
to first establish
\[
\ln(K^{\pm}(v)) \geq \max(\ln(K^{+}(v))-\ln(2),\ln(K^{-}(v))-\ln(2))
\]
and further bound each term in the maximum by the respective 
quadratic lower bound. We conclude that
if a $v$ achieves 
\[
\sum_i \lambda_{2,i}^{+} (w_i r_i -v) + \sum_i (\ln(1-\lambda_{2,i}^{+})+\lambda_{2,i}^{+})(w_i r_i -v)^2 = \ln\left(\frac{2}{\alpha}\right)
\]
or a $v'=1-v$ achieves 
\[
\sum_i \lambda_{2,i}^{-} (w_i r_i'-v') + \sum_i (\ln(1-\lambda_{2,i}^{-})+\lambda_{2,i}^{-})(w_i r_i' - v')^2
=\ln\left(\frac{2}{\alpha}\right)
\]
then we also achieve $K^{\pm}(v) > \frac{1}{\alpha}$. 
Thus, a valid confidence interval can be obtained by considering
the roots of these quadratics.  Let
\begin{align*}
C&=\sum_i \lambda_{2,i}^{+} w_i r_i & 
C'&=\sum_i \lambda_{2,i}^{-} w_i r_i'\\
S&=\sum_i \lambda_{2,i}^{+} & 
S'&=\sum_i \lambda_{2,i}^{-} \\
Q&=\sum_i \left(\ln(1-\lambda_{2,i}^{+})+\lambda_{2,i}^{+}\right) w_i^2 r_i^2 &
Q'&=\sum_i \left(\ln(1-\lambda_{2,i}^{-})+\lambda_{2,i}^{-}\right) w_i^2 r_i'^2\\
T&=\sum_i \left(\ln(1-\lambda_{2,i}^{+})+\lambda_{2,i}^{+}\right) w_ir_i &
T'&=\sum_i \left(\ln(1-\lambda_{2,i}^{-})+\lambda_{2,i}^{-}\right) w_ir_i'\\
U&=\sum_i \left(\ln(1-\lambda_{2,i}^{+})+\lambda_{2,i}^{+}\right) &
U'&=\sum_i \left(\ln(1-\lambda_{2,i}^{-})+\lambda_{2,i}^{-}\right)\\
\end{align*}
We obtain:
\[
v_{\min}= \frac{2T+S-\sqrt{(2T+S)^2-4U(Q+C-\ln(2/\alpha))}}{2U}
\]
or $v_{\min}=0$ if the discriminant is negative, 
and
\[
v_{\max}=1-v' = 1-\frac{2T'+S'-\sqrt{(2T'+S')^2-4U'(Q'+C'-\ln(2/\alpha))}}{2U'}
\]
or $v_{\max}=1$ if the discriminant is negative.

\section{Reward Predictors}
\label{app:reward-predictors}
\subsection{Betting}
We describe betting for $K_t^{+q}(v)$. Betting for $K_t^{-q}(v)$ 
is analogous. 
We overload the log wealth at step $i$ when betting against $v$ as 
$\ell_i^v(\lambda) = \ln(1+ \lambda_{1,i}(w_i-1) + \lambda_{2,i}(w_i r_i -c_i -v)$. We use lemma~\ref{lem:quadbound} to obtain that for any $\lambda \in \mathcal{E}_v^{1/2}$, we have
\[
\ln(K_t^{+q}(v)) = \sum_{i=1}^{t-1} \ell_i^v(\lambda) \geq \lambda^\top \sum_{i=1}^{t-1} b_i(v) + \psi \lambda^\top \left(\sum_{i=1}^{t-1} A_i(v) \right)\lambda, 
\]
where now $b_i(v)=\cvec{w_i-1}{w_ir_i -c_i -v}$ and 
$A_i(v) = b_i(v)b_i(v)^\top$. As in the case without 
reward predictor we have that the wealth lower bound 
is a polynomial in $v$ with
    \begin{align*}
        \sum_{i=1}^{t-1} A_i(v) &= 
        A_t^{(0)} + v A_t^{(1)} + v^2 A_t^{(2)},\\   
        \sum_{i=1}^{t-1} b_i(v) &= b_t^{(0)} + v b_t^{(1)}, 
    \end{align*}
and the coefficients can be maintained as
\allowdisplaybreaks
    \begin{align*}
        A_t^{(0)} &=\sum_{i=1}^{t-1}\symmat{(w_i-1)^2}{(w_i-1)(w_i r_i-c_i)}{(w_ir_i-c_i)^2}, \\
        A_t^{(1)} &= \sum_{i=1}^{t-1} \symmat{0}{-(w_i-1)}{-2(w_ir_i-c_i)},\\
        A_t^{(2)} &=\sum_{i=1}^{t-1}  \symmat{0}{0}{1},\\
        b_t^{(0)} &=\sum_{i=1}^{t-1}  \cvec{w_i-1}{w_ir_i-c_i},\\
        b_t^{(1)} &=\sum_{i=1}^{t-1}  \cvec{0}{-1}.
    \end{align*}
Given a $v$ we compute concrete values for these coefficients 
and then solve 
\[
\lambda_t = \argmax_{\lambda \in \mathcal{E}^{1/2}} 
\lambda^\top \sum_{i=1}^{t-1} b_i(v) + \psi \lambda^\top \left(\sum_{i=1}^{t-1} A_i(v) \right)\lambda.
\]
A similar procedure like the one in Algorithm~\ref{alg:argmax} 
can then be used for solving this problem.

\subsection{Avoiding Grid Search}
To find the value of $v$ that we can plug in to the above 
optimization problem we proceed as in section~\ref{sec:avoid-grid},
and further explained in Appendix~\ref{app:nogrid2d}.
To find a $v$ such that 
$K_t^{\pm q}(v)\geq \frac{1}{\alpha}$
it suffices to solve
\[
\sum_{i=1}^{t-1} \lambda_i^\top b_i(v) + \psi \sum_{i=1}^{t-1} \lambda_i^\top  A_i(v) \lambda_i = \ln\left(\frac{2}{\alpha}\right) ,
\]
given the previous bets $\lambda_1,\ldots, \lambda_{t-1}$. This 
is a second degree equation which can be solved by maintaining 
the quantities
\begin{align*}
C_t&= \sum_{i=1}^{t-1} {\lambda_i}^\top \cvec{w_i-1}{w_i r_i-c_i} , \\
S_t&= \sum_{i=1}^{t-1} {\lambda_i}^\top \cvec{0}{1} , \\
Q_t&= \sum_{i=1}^{t-1} \psi  {\lambda_i}^\top \symmat{(w_i-1)^2}{(w_i-1)(w_i r_i-c_i)}{(w_ir_i-c_i)^2} \lambda_i , \\
T_t&= \sum_{i=1}^{t-1} \psi  {\lambda_i}^\top \symmat{0}{-(w_i-1)}{-2(w_ir_i-c_i)} \lambda_i , \\
U_t&=  \sum_{i=1}^{t-1} \psi {\lambda_i}^\top \symmat{0}{0}{1} \lambda_i ,
\end{align*}
and finding the largest real root $v$ of
\[
C_t - S_t v + Q_t + T_t v + U_t v^2 - \ln\left(\frac{2}{\alpha}\right) = 0,
\]
if it exists, otherwise setting $v=0$.

\subsection{Double Hedging}
Double Hedging boils down to running four processes: 
$K_t^{+q}, K_t^{-q}, K_t^{+}$, and $K_t^{-}$. Note 
that the wealth is split in 4 so anywhere we used 
$\ln\left(\frac{2}{\alpha}\right)$ in a hedged process
now we need to use $\ln\left(\frac{4}{\alpha}\right)$.
Note that both $K_t^{+q}(v)$ and $K_t^{+}(v)$ are trying to 
establish bounds for the same random variable and in principle
they could communicate about values that
have been eliminated. However we keep things simple
and just run the four processes without sharing 
any information. The wealth of the doubly 
hedged process can then be lower bounded by 
the wealth of the most successful betting 
strategy starting from a wealth of $\frac{1}{4}$.

\section{Gated Deployment}
\label{app:gd}
\subsection{Hedging}
Since we don't typically know whether $\pi$ is better or 
worse that $h$ we can hedge our bets via the process
\[
K_{t}^{\pm gd}(v) =\frac{1}{2} (K_{t}^{+gd}(v)+K_{t}^{-gd}(v)),
\]
where 
\begin{align*}
    K_{t}^{+ gd}(v) &=\prod_{i=1}^t \left(1+\lambda_{1,i}^{+} (w_i-1) + \lambda_{2,i}^{+} (w_ir_i -r_i -v)\right),  \\
    K_{t}^{- gd}(v) &=\prod_{i=1}^t \left(1+\lambda_{1,i}^{-} (w_i-1) + \lambda_{2,i}^{-} (w_ir_i' -r_i' -v')\right) ,
\end{align*}
for predictable 
$\lambda_{1,i}^{+},\lambda_{2,i}^{+}, \lambda_{1,i}^{-},\lambda_{2,i}^{-}$
subject to $\lambda_i^{+},\lambda_{i}^{-} \in \mathcal{G}_{v}^0$.
As before, $r_i'=1-r_i$ and $v'=1-v$. 

\subsection{Betting and Avoiding Grid Search}
Betting and avoiding grid search can be obtained using 
the same equations as for reward predictors but replacing 
all occurrences of $c_i$ with $r_i$.

A key difference we spell out is the feasible region. In
order to use common bets and to be able to use the 
quadratic lower bound of the log wealth we need to 
specify the set $\bigcap_{v\in [0,1]} \mathcal{G}_v^{m}$.
This set is equivalent to
\begin{align*}
\mathcal{G}^m = \left\{\lambda:  
\left[\begin{array}{cc}
-1 & -2 \\ 
-1 & 0 \\
W &  -1\\
W & W
\end{array}\right]
\lambda \geq m - 1 \label{eq:gd-explicit-domain}
\right\},
\end{align*}
where $W = w_{\max}-1$. If we further restrict $\lambda_2 \geq 0$
for each of the subprocesses because we expect each to eliminate 
$v$ such that $\E[wr-v]>0$ and $v'$ such that $\E[wr'-v']>0$ 
then the feasible region further simplifies to 
\[
\mathcal{G} = \{\lambda: \lambda_2\geq 0,
W\lambda_1 -\lambda_2 \geq m-1, 
-\lambda_1 -2 \lambda_2 \geq m-1 \}.
\]
Placing bets in this region can be done using the same 
ideas as Algorithm~\ref{alg:argmax}.

\section{Reproducibility Checklist}
\begin{description}
\item Assumptions: The contextual bandit data is iid. 
The policy $\pi$ is absolutely 
continuous with respect to behavior policy $h$.
\item Complexity: MOPE and the scalar Betting Strategy are streaming algorithms. They require constant time per sample and constant memory
independent of number of samples. The exact wealth ablation requires
memory that scales linearly with the number of samples and time 
per step that scales at least linearly with the number of samples.
The ablation that solves a QP per value $v$ requires at least $\frac{1}{\epsilon}$ 
times more memory and computation that MOPE and provides 
results that are accurate up to $\epsilon$. We used 
$\epsilon=0.005$ in the experiments.
\item Code: included with the supplementary material and 
will be released publicly upon acceptance.
\item Data: synthetic environments are part of the code.
Instructions for getting the mnist8m data are in the 
``Mnist-Policies'' notebook.
\item Hyperparameters: There are no hyperparameters.
The confidence level is an input and is stated in each experiment
description or the corresponding figure.
\item Computing infrastructure: Off-the-shelf workstation running Linux
(Code works on a Windows laptop as well).
\end{description}

\end{document}